\def\eqref#1{equation~\ref{#1}}
\def\1{\bm{1}}
\DeclareMathAlphabet{\mathsfit}{\encodingdefault}{\sfdefault}{m}{sl}
\SetMathAlphabet{\mathsfit}{bold}{\encodingdefault}{\sfdefault}{bx}{n}
\newcommand{\E}{\mathbb{E}}
\newcommand{\Var}{\mathrm{Var}}
\DeclareMathOperator*{\argmin}{arg\,min}
\newtheorem{theorem}{Theorem}
\newtheorem{lemma}{Lemma}
\newcommand{\merlin}{{MeRLin}}
\newcommand{\merlinfinetune}{{MeRLin-ft}}
\newcommand{\targetonly}{{target-only}}
\newcommand{\Targetonly}{{Target-only}}
\newcommand{\thetahatphi}{\widehat{\theta}_t(\phi)}
\newcommand{\losstarget}{L_{\textup{meta},t}}
\newcommand{\lossmeta}{L_{\textup{meta}}}
\newcommand{\lossjoint}{L_{\textup{joint}}}
\newcommand{\raw}{$\rightarrow$}
\newcommand{\Exp}{\mathrm{\mathbb{E}}}
\newcommand{\Real}{\mathbb{R}}
\newcommand{\norm}[1]{\left\lVert#1\right\rVert}
\newcommand{\lbr}{\left(}
\newcommand{\rbr}{\right)}
\newcommand{\lbbr}{\left[}
\newcommand{\rbbr}{\right]}
\newcommand{\loss}[1]{L_{#1}}
\newcommand{\losssub}[2]{\ell_{{#1}}(#2)}
\newcommand{\Lmeta}[2]{\lossmeta^{#1}({#2})}
\newcommand{\Lsource}[2]{L^{#1}_{\DS}({#2})}
\newcommand{\Ltarget}[2]{L^{#1}_{\DTe}({#2})}
\newcommand{\Ljoint}[2]{\lossjoint^{#1}({#2})}
\newcommand{\LossD}[2]{L_{#1}({#2})}
\newcommand{\w}[1]{\phi_{{#1}}}
\newcommand{\W}{\phi}
\newcommand{\agen}{{\theta}}
\newcommand{\as}{{\theta_s}}
\newcommand{\at}{{\theta_t}}
\newcommand{\df}{{m}}
\newcommand{\DS}{\mathcal{D}_s}
\newcommand{\DT}{\mathcal{D}_t}
\newcommand{\DSe}{\widehat{\mathcal{D}}_s}
\newcommand{\DTe}{\widehat{\mathcal{D}}_t}
\newcommand{\DTa}{\widehat{\mathcal{D}}_{t}^{\textup{tr}}}
\newcommand{\DTb}{\widehat{\mathcal{D}}_{t}^{\textup{val}}}
\newcommand{\thetapre}{\hat{\theta}_s}
\newcommand{\phipre}{\hat{\phi}_{\textup{pre}}}
\newcommand{\phimeta}{\hat{\phi}_{\textup{meta}}}
\newcommand{\phijoint}{\hat{\phi}_{\textup{joint}}}
\newcommand{\nt}{{n_t}}
\newcommand{\thetahatphimeta}{\hat{\theta}_t(\hat{\phi}_{\textup{meta}})}
\title{Meta-learning Transferable Representations with a Single Target Domain}
\author{Hong Liu \thanks{Tinghua University, email: \texttt{h-l17@mails.tsinghua.edu.cn}} \and Jeff Z. HaoChen \thanks{Stanford University, email: \texttt{jhaochen@stanford.edu}}  \and Colin Wei \thanks{Stanford University, email: \texttt{colinwei@stanford.edu}} \and Tengyu Ma \thanks{Stanford University, email: \texttt{tengyuma@stanford.edu}}}
\begin{document}
\maketitle

\begin{abstract}
	Recent works found that fine-tuning and joint training---two popular approaches for transfer learning---do not always improve accuracy on downstream tasks. First, we aim to understand more about when and why fine-tuning and joint training can be suboptimal or even harmful for transfer learning. We design semi-synthetic datasets where the source task can be solved by either source-specific features or transferable features. 
	We observe that (1) pre-training may not have incentive to learn transferable features  and (2) joint training may simultaneously learn source-specific  features and  overfit to the target. 
Second, to  improve over fine-tuning and joint training, we propose \textbf{Me}ta \textbf{R}epresentation \textbf{L}earn\textbf{in}g  (\merlin) to learn transferable features.~\merlin~meta-learns representations by ensuring that a head fit on top of the representations with target training data also performs well on target validation data. We also prove that \merlin~recovers the target ground-truth model with a quadratic neural net parameterization and a source distribution that contains both  transferable and source-specific features. On the same distribution,  pre-training and joint training provably fail to learn transferable features. {\merlin} empirically outperforms previous state-of-the-art transfer learning algorithms  on various real-world vision and NLP transfer learning benchmarks. 
\end{abstract}

\section{Introduction}
 
Transfer learning---transferring knowledge learned from a large-scale source dataset to a small target dataset---is an important paradigm in machine learning~\citep{cite:NIPS14CNN} with wide applications in computer vision~\citep{pmlr-v32-donahue14} and natural language processing (NLP)~\citep{2018Universal,devlin-etal-2019-bert}. Because the source and target tasks are often related, we expect to learn features that are transferable to the target task from the source data. These features may help learn the target task with fewer examples \citep{cite:ICML15DAN,tamkin2020investigating}.

Mainstream approaches for transfer learning are fine-tuning and joint training. Fine-tuning initializes from a model pre-trained on a large-scale source task (e.g., ImageNet) and  continues training  on the target task with a potentially different set of labels (e.g., object recognition~\citep{Wang_2017_CVPR,2018Learning,alex2019big}, object detection~\citep{Girshick_2014_CVPR}, and segmentation \citep{7298965,He_2017_ICCV}). Another enormously successful example of fine-tuning is in NLP: pre-training transformers and fine-tuning on downstream tasks leads to state-of-the-art results for many NLP tasks~\citep{devlin-etal-2019-bert,NIPS2019_8812}. In contrast to the two-stage optimization process of fine-tuning, joint training optimizes a linear combination of the objectives of the source and the target tasks~\citep{Kokkinos_2017_CVPR,2017Multi,liu-etal-2019-multi-task}. 
 
Despite the pervasiveness of fine-tuning and joint training, recent works uncover that they are not always panaceas for transfer learning. \citet{2018ImageNet} found that the pre-trained models learn the texture of ImageNet, which is biased and not transferable to target tasks.
ImageNet pre-training does not necessarily improve accuracy on COCO~\citep{cite:arxivrethink}, fine-grained classification~\citep{Kornblith_2019_CVPR}, and medical imaging tasks \citep{NIPS2019_8596}. \citet{Zhang_2020_ICLR} observed that large model capacity and discrepancy between the source and target domain eclipse the effect of joint training. Nonetheless, we do not yet have a systematic understanding of what makes the successes of fine-tuning and joint training inconsistent. 

The goal of this paper is two-fold: (1) to understand more about when and why fine-tuning and joint training can be suboptimal or even harmful for transfer learning; (2) to design algorithms that overcome the drawbacks of fine-tuning and joint training and consistently outperform them.

To address the first question, we hypothesize that fine-tuning and joint training do not have incentives to prefer learning transferable features over source-specific features, and thus whether they learn transferable features is rather coincidental and depends on the property of the datasets.  To empirically analyze the hypothesis, we design a semi-synthetic dataset that contains artificially-amplified transferable features and source-specific features simultaneously in the source data. Both the transferable and source-specific features can solve the source task, but only transferable features are useful for the target. We analyze what features fine-tuning and joint training will learn. See Figure \ref{fig:semi} for an illustration of the semi-synthetic experiments. We observed following failure patterns of fine-tuning and joint training on the semi-synthetic dataset.

\begin{itemize}[leftmargin=*]
\item {
	Pre-training may learn non-transferable features that don't help the target when both transferable and source-specific features can solve the source task, since it’s oblivious to the target data. When the dataset contains source-specific features that are more convenient for neural nets to use, pre-training learns them; as a result, fine-tuning starting from the source-specific features does not lead to improvement.	}

\item {
	Joint training learns source-specific features and overfits on the target.
	A priori, it may appear that the joint training should prefer transferable features because the target data is present in the training loss. However, joint training easily overfits to the target especially when the target dataset is small. 
When the source-specific features are the most convenient for the source, joint training simultaneously learns the source-specific features  and memorizes the target dataset. 
}
\end{itemize}

Toward overcoming the drawbacks of fine-tuning and joint training, we first note that any proposed algorithm, unlike fine-tuning, should use the source and the target simultaneously to encourage extracting shared structures. Second and more importantly, we recall that good representations should enable generalization: we should not only be able to fit a target head with the representations (as joint training does), but the learned head should also generalize well to a held-out target dataset. 
With this intuition, we propose \textbf{Me}ta \textbf{R}epresentation \textbf{L}earn\textbf{in}g (\textbf{\merlin}) to encourage learning transferable and generalizable features: we meta-learn a feature extractor such that the head fit to a target training set performs well on a target validation set. In contrast to the standard model-agnostic meta-learning (MAML) \citep{pmlr-v70-finn17a}, which aims to learn prediction models that are adaptable to multiple target tasks from multiple source tasks, our method meta-learns transferable representations with only \textit{one source and one target domain}. 

Empirically, we first verify that \merlin~learns transferable features on the semi-synthetic dataset. We then show that \merlin~outperforms state-of-the-art transfer learning baselines in real-world vision and NLP tasks such as ImageNet to fine-grained classification and language modeling to GLUE. 

Theoretically, we analyze the mechanism of the improvement brought by \merlin. In a simple two-layer quadratic neural network setting,  we prove that \merlin~recovers the target ground truth with only limited target examples whereas both fine-tuning and joint training fail to learn transferable features that can perform well on the target. 

In summary, our contributions are as follows. (1) Using a semi-synthetic dataset, we analyze and diagnose when and why fine-tuning and joint training fail to learn transferable representations. (2) We design a meta representation learning algorithm ({\merlin}) which outperforms state-of-the-art transfer learning baselines. (3) We rigorously analyze the behavior of fine-tuning, joint training, and {\merlin} on a special two-layer neural net setting. 
\section{Setup and Preliminaries}\label{sec:preliminaries}

In this paper, we study \textit{supervised transfer learning}. Consider an input-label pair $(x,y)\in\Real^d\times\Real$. We are provided with a source distributions $\DS$ and a target distribution $\DT$ over $\Real^d\times\Real$. The source dataset $\DSe=\{x_{i}^{s},y_{i}^{s}\}_{i=1}^{n_s}$ and the target dataset $\DTe=\{x_{i}^{t},y_{i}^{t}\}_{i=1}^{n_t}$ consist of $n_s$ i.i.d. samples from $\DS$ and $n_t$ i.i.d. samples from $\DT$ respectively. 
Typically $n_s\gg n_t$. 
We view a predictor as a composition of a feature extractor $h_{\phi}:\Real^d\rightarrow\Real^m$ parametrized by $\phi\in\boldsymbol{\Phi}$, which is often a deep neural net, and a head classifier $g_{\theta}:\Real^m\rightarrow\Real$ parametrized by $\theta\in\boldsymbol{\Theta}$, which is often linear. That is, the final prediction is $f_{\theta,\phi}(x)=g_{\theta}(h_{\phi}(x))$. 
Suppose the loss function is $\ell(\cdot,\cdot)$, such as cross entropy loss for classification tasks. Our goal is to learn an accurate model on the target domain $\DT$. 

\begin{figure}[t]
  \centering
   \subfigure{
    \includegraphics[width=0.69\textwidth]{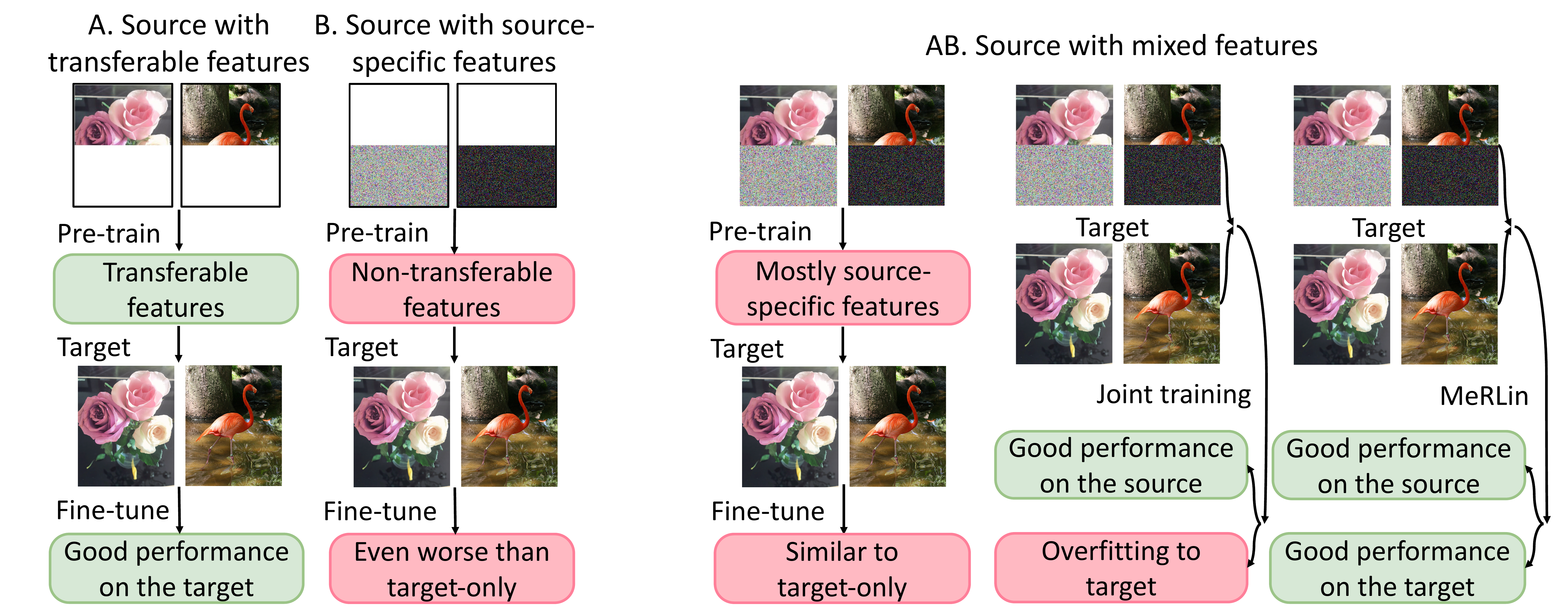} 
    }
    \hspace{10pt}
   \subfigure{
    \includegraphics[width=0.24\textwidth]{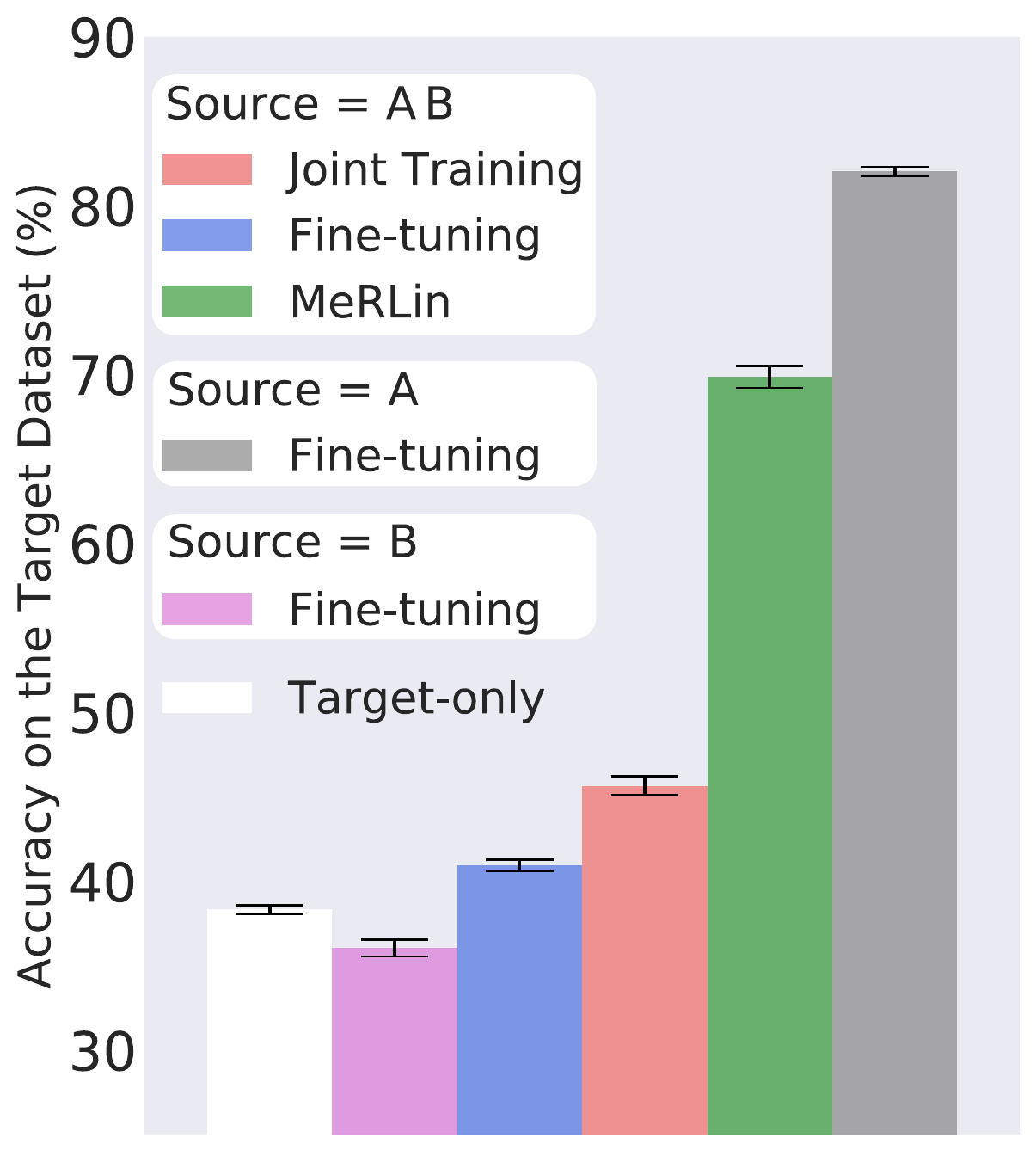}
    }
  \caption{    \label{fig:semi} Illustration of the features learned by fine-tuning, joint training, and {\merlin} on the semi-synthetic dataset. \textbf{Left}: The semi-synthetic dataset and the qualitative observations on the representations learned by three algorithms. \textbf{Right:} Quantitative results on the target test accuracy. See more interpretations, analysis, and results in Section~\ref{sec:limitations}.} 
\end{figure}

Since the label sets of the source and target tasks can be different, we usually learn two heads for the source task and the target task separately, denoted by $\theta_s$ and $\theta_t$, with a shared feature extractor $\phi$. Let  $L_{\widehat{D}}(\theta, \phi)$ be the empirical loss of model $g_{\theta}(h_{\phi}(x))$ on the empirical distribution $\widehat{D}$, that is, $L_{\widehat{D}}(\theta, \phi) := \E_{(x,y)\in \widehat{D}} \ell(g_{\theta}(h_\phi(x)),y)$ where $(x,y)\in \widehat{D}$ means sampling uniformly from the dataset $\widehat{D}$. Using this notation, the standard supervised loss on the source (with the source head $\theta_s$) and loss on the target (with the target head $\theta_t$) can be written as $
L_{\DSe}(\theta_s, \phi)$ and $L_{\DTe}(\theta_t, \phi)$ respectively.

We next review mainstream transfer learning baselines and describe them in our notations. 

\noindent\textbf{\Targetonly} is the trivial algorithm that only trains on the target data $\DTe$ with the objective $\loss{\DTe}(\theta_t,\phi)$ starting from random initialization. 
With insufficient target data, target-only is prone to overfitting.

\noindent\textbf{Pre-training} starts with random initialization and \textit{pre-trains} on the source dataset with objective function $\loss{\DSe}(\theta_s,\phi)$ to obtain the pre-trained feature extractor $\phipre$ and head $\thetapre$. 

\noindent\textbf{Fine-tuning} initializes the target head $\theta_t$ randomly and initializes the feature extractor $\phi$ by $\phipre$ obtained in pre-training, and \textit{fine-tunes} $\phi$ and $\theta_t$ on the target by optimizing $\loss{\DTe}(\theta_t,\phi)$ over both $\theta_t$ and $\phi$. Note that in this paper, fine-tuning refers to fine-tuning all layers by default. 

\noindent\textbf{Joint training} starts with random initialization, and trains on the source and target dataset jointly by optimizing a linear combination of their objectives over the heads $\theta_s$, $\theta_t$ and the shared feature extractor $\phi$: $\min_{\theta_s, \theta_t, \phi}~ \Ljoint{}{\as,\at,\W} := (1-\alpha) \loss{\DSe}(\theta_s, \phi) + \alpha \loss{\DTe}(\theta_t, \phi)$. The hyper-parameter $\alpha$ is used to balance source training and target training. We use cross-validation to select optimal $\alpha$.
\section{Limitations of Fine-tuning and Joint Training: Analysis on Semi-synthetic Data}\label{sec:limitations}

Previous works~\citep{cite:arxivrethink, Zhang_2020_ICLR} have observed cases when fine-tuning and joint training fail to improve over target-only. Our hypothesis is that both pre-training and joint training do not have incentives to prefer learning transferable features over source-specific features, and thus the performance of fine-tuning and joint training rely on whether the transferable features happen to be the best features for predicting the source labels. Validating this hypothesis on real datasets is challenging, if not intractable---it's unclear what's the precise definition or characterization of transferable features and source-specific features. Instead, we create a semi-synthetic dataset where transferable features and source-specific features are prominent and well defined. 

\paragraph{A semi-synthetic dataset.} The target training dataset we use is a uniformly-sampled subset of the CIFAR-10 training set of size 500. The target test dataset is the original CIFAR-10 test set. The source dataset of size 49500, denoted by AB, is created as follows. The upper halves of the examples are the upper halves of the CIFAR-10 images (excluding the 500 example used in target). The lower halves contain a signature pattern that strongly correlates with the class label: for class $c$, the pixels of the lower half are drawn i.i.d. from gaussian distribution $\mathcal{N}(c/10,0.2^2)$. Therefore, averaging the pixels in the lower half of the image can reveal the label because the noise will get averaged out.  The benefit of this dataset is that any features related to the top half of the images can be defined as transferable features, whereas the features related to the bottom half are source-specific. Moreover, we can easily tell which features are used by a model by testing the performance on images with masked top or bottom half. For analysis and comparison, we define A to be the dataset that contains the top half of dataset AB and zeros out the bottom half, and B vice versa.  See Figure~\ref{fig:semi} (left) for an illustration of the datasets. Further details are deferred to Section \ref{sec:semi_details}

\begin{figure}[t]
  \centering
  \subfigure[T-SNE embeddings of features on the target dataset.]{
    \includegraphics[width=0.634\textwidth]{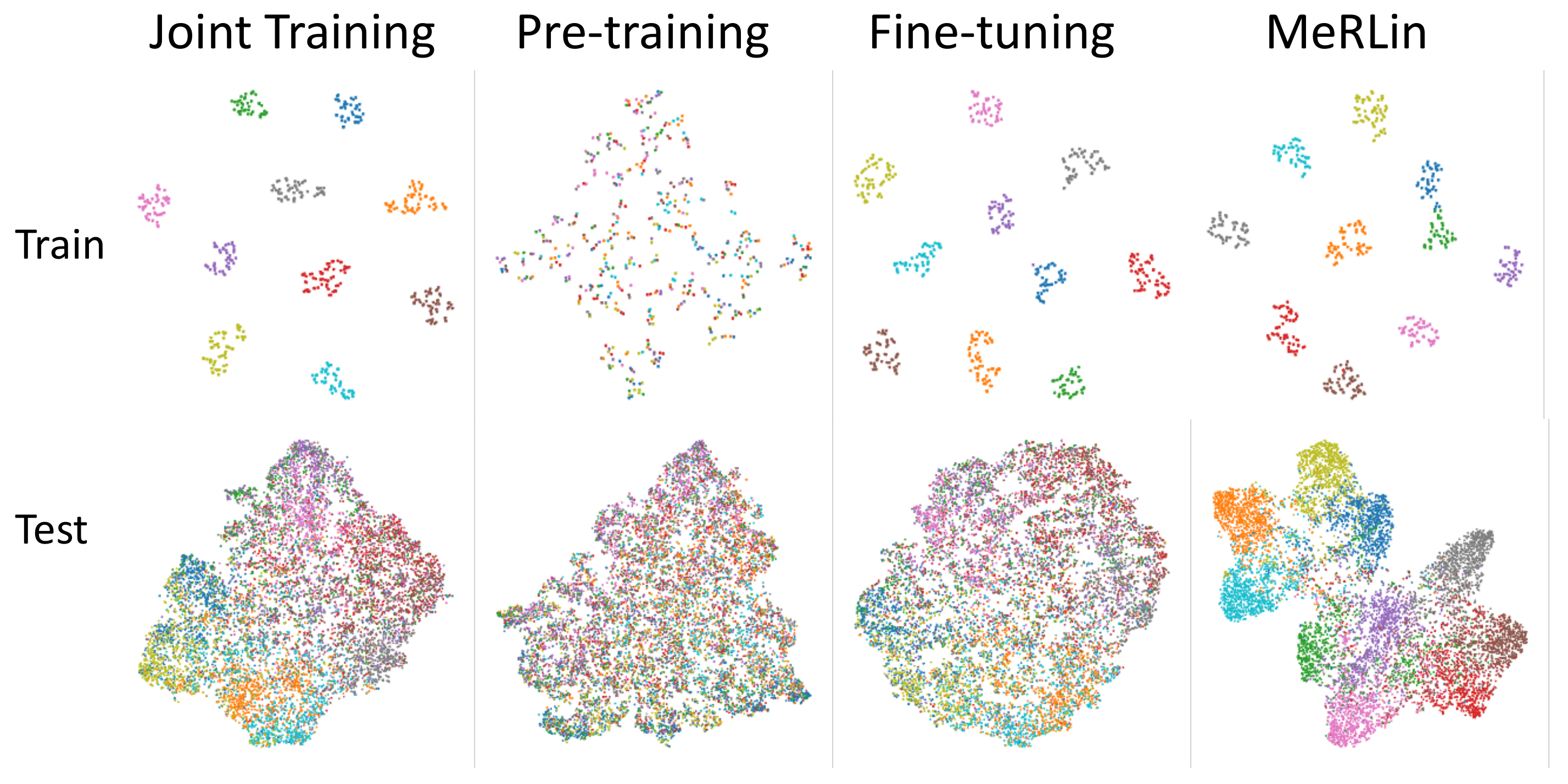}
    \label{fig:tsne}
    }
    \hspace{10pt}
      \subfigure[Ablation.]{
    \includegraphics[width=0.225\textwidth]{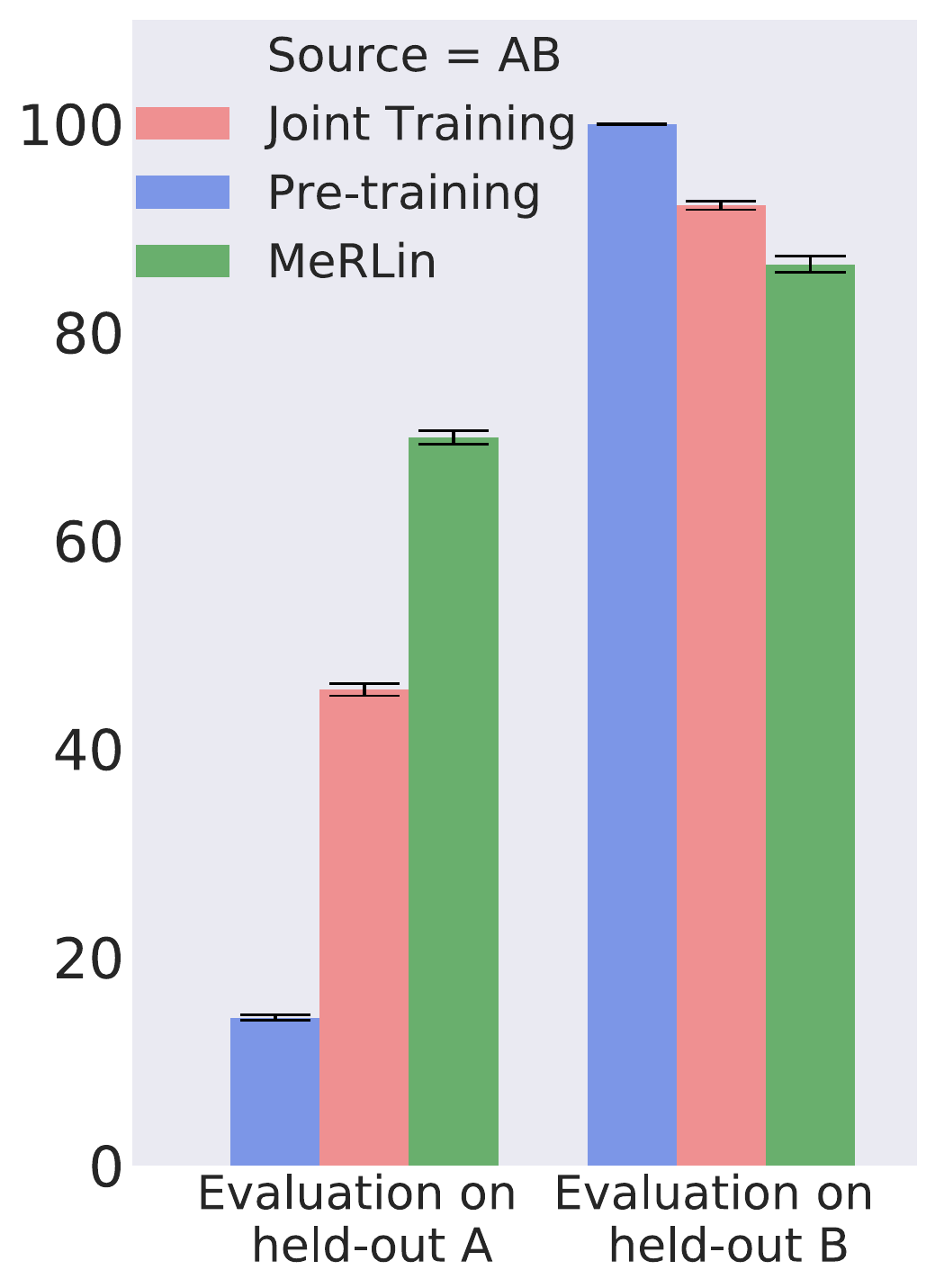}
    \label{fig:semi2}
    }
 \caption{(a) \textbf{T-SNE visualizations of features on the target train and test set}. The representations of pre-training work poorly on both target train and test set, indicating that transferable features are not learned. Both joint training and fine-tuning work well on the target train set but poorly on the test set, indicating overfitting. \merlin~works well on the target test set. (b) \textbf{Evaluation of different methods on A and B.} Joint-training and pre-training rely heavily on the source-specific feature B and learn the transferable feature A poorly compared to~\merlin. See more details in Section~\ref{sec:limitations}.}
  \label{fig2}
\end{figure}

In Figure \ref{fig:semi} (right), we evaluate various algorithms' performance on target test data. In Figure~\ref{fig:tsne} (left), we run algorithms with AB being the source dataset and  visualize the learned features on the target training dataset and target test dataset to examine the generalizability of the features. In Figure~\ref{fig:tsne} (right), we evaluate the algorithms on the held-out version of dataset A and B to examine what features the algorithms learn. ResNet-32 \citep{cite:CVPR16DRL} is used for all settings. 

\paragraph{Analysis: }  First of all, \textit{target-only} has low accuracy (38\%) because the target training set is small. 
Except when explicitly mentioned, all the discussions below are about algorithms on the source AB. 

\textit{Fine-tuning fails because pre-training does not prefer to learn transferable features and fine-tuning overfits.} Figure~\ref{fig:semi2} (pre-training) shows that the pre-trained model has near-trivial accuracy on held-out A but near-perfect accuracy on held-out B, indicating that it solely relies on the source-specific feature (bottom half) and does not learn transferable features. Figure~\ref{fig:tsne} (pre-training) shows that indeed pre-trained features do not have even correlation with target training and test sets. Figure~\ref{fig:tsne} (fine-tuning) shows that fine-tuning improves the features' correlation with the training target labels but it does not generalize to the target test because of overfitting. The performance of fine-tuning (with source =AB) in Figure~\ref{fig:semi} (right) also corroborates the lack of generalization. 

\textit{Joint training fails because it simultaneously learns mostly source-specific features and features that overfit to the target. } Figure~\ref{fig:semi2} (joint training) shows that the joint training model performs much better on held-out B (with 92\% accuracy) than on the held-out A (with 46\% accuracy), indicating it learns the source-specific feature very well but not the transferable features. The next question is what features joint training relies on to fit the target training labels. Figure~\ref{fig:tsne} shows strong correlation between joint training model's features and labels on the target training set, but much less correlation on the target test set, suggesting that the joint training model's feature extractor, applied on the target data (which doesn't have source-specific features), overfits to the target training set. This corroborates the poor accuracy of joint training on the target test set (Figure~\ref{fig:semi}), which is similar to target-only's.\footnote{As sanity checks, when the source contains only transferable features (Figure~\ref{fig:semi}, right, source = A), fine-tuning works well, and when no transferable features  (Figure~\ref{fig:semi}, right, source = B), it does not. }

In Section~\ref{sec:theory}, we rigorously analyze the behavior of these algorithms on a much simplified settings and show that the phenomena above can theoretically occur.  
\section{\merlin: Meta Representation Learning}\label{sec:method}

In this section, we design a meta representation learning algorithm that encourages the discovery of transferable features. As shown in the semi-synthetic experiments, fine-tuning does not have any incentive to learn transferable features if they are not the most convenient for predicting the source labels because it is oblivious to target data. Thus we have to use the source and target together to learn transferable representations. A natural attempt would have been joint training, but it overfits to the target when the target data is scarce as shown in the t-SNE visualizations in Figure~\ref{fig:tsne}. 

To fix the drawbacks of joint training, we recall that good representations should not only work well for the target \textit{training} set but  also \textit{generalize} to the target distribution.  More concretely, a good representation $h_\phi$ should enable the generalization of the linear head learned on top of it---a linear head $\theta$ that is learned by fixing the feature $h_\phi(x)$ as the inputs should generalize well to a held-out dataset. To this end, we design a bi-level optimization objective to learn such features, inspired by meta-learning for fast adaptation~\citep{pmlr-v70-finn17a}  and learning-to-learn for automatic hyperparameter optimization~\citep{maclaurin2015gradient,thrun2012learning} (more discussions below).

We first split the target training set $\DTe$ randomly into $\DTa$ and $\DTb$. Given a feature extractor $\phi$, let $\widehat{\theta}_t(\phi)$ be the linear classifier learned by using features $h_\phi(x)$ as the inputs on the dataset $\DTa$. 
\begin{align}\label{eqn:inner} 
\thetahatphi = \mathop{\arg\min}_{\theta}L_{\DTa}(\theta,\phi)
\end{align}
Note that $\widehat{\theta}_t(\phi)$ depends on the choice of $\phi$ (and is almost uniquely decided by it because the objective is convex in $\theta$). As alluded before, our final objective involves the generalizability  of  $\thetahatphi$ to the held-out dataset $\DTb$:
\begin{align}
\losstarget(\phi) = L_{\DTb}({\thetahatphi} ,\phi) =  \E_{(x,y)\in \DTb} \ell(g_{\thetahatphi}(h_\phi(x)),y).
\end{align}
The final objective is a linear combination of $\losstarget(\phi)$ with the source loss 
\begin{align}
\mathop{\textup{minimize}}_{\phi\in\boldsymbol{\Phi},\theta_s\in\boldsymbol{\Theta}}~ \lossmeta(\phi, \theta_s) :=
L_{\DSe}(\theta_s,\phi)+ \rho \cdot \losstarget(\phi).
\end{align}
To optimize the objective, we can use standard bi-level optimization technique as in learning-to-learn approaches as summarized in Algorithm~\ref{alg:merlin}. We also design a sped-up version of \merlin~to by changing the loss to squared loss so that the $\thetahatphi$ has an analytical solution. More details are provided in Section \ref{sec:speedup} (Algorithm~\ref{alg:merlin_speed}).

\paragraph{Comparison to other meta-learning work.} The key distinction of our approach from MAML \citep{pmlr-v70-finn17a} and other meta-learning algorithms (e.g.,~\citep{nichol2018firstorder, bertinetto2018metalearning}) is that we only have  a single source task and a single target task. Recent work~\citep{Raghu2020Rapid}  argues that feature reuse is the dominating factor of the effectiveness of MAML. In our case, the training target task is exactly the same as the test task, and thus the only possible contributing factor is a better-learned representation instead of fast adaptation. Our algorithm is in fact closer to the work on hyperparameter optimization~\citep{maclaurin2015gradient, zoph2016neural}---if we view the parameters of the head $\theta_t$ as $m$ \textbf{hyperparameters} and view $\phi$ and $\theta_s$ as the ordinary parameters, then our algorithm is tuning hyperparameters on the validation set using gradient descent. 

\begin{algorithm}[t]
\caption{Meta Representation Learning (\merlin).}
\label{alg:merlin}
\begin{algorithmic}[1]
\STATE {\bfseries Input:} the source dataset $\DSe$ and the evolving target dataset $\DTe$.    
\STATE {\bfseries Output:} learned representations $\W$.
\FOR{$i=0$ {\bfseries to} \texttt{MaxIter}} 
  \STATE Initialize the target head $\theta_t^{[0]}$.
  \STATE Randomly sample target train set $\DTa$ and target validation set $\DTb$ from $\DTe$.
  	\FOR{$k=0$ {\bfseries to} $n-1$}
	\STATE Train the target head on $\DTa$:
	\vspace{-5pt}
	\begin{align*}
	\theta_{t}^{[k+1]} \leftarrow \theta_{t}^{[k]} - \eta\nabla_{\theta_{t}^{[k]}}L_{\DTa}(\theta_{t}^{[k]},\phi^{[i]}).
	\end{align*}
    \ENDFOR
\STATE In the outer loop, update the representation $\W$ and the source head $\as$:
\vspace{-5pt}
\begin{align*}
(\W^{[i+1]},\theta_s^{[i+1]}) \leftarrow (\W^{[i]},\theta_s^{[i]}) - \eta\nabla_{(\W^{[i]},\theta_s^{[i]}}\left[ L_{\DSe}(\theta_s^{[i]},\W^{[i]})+\rho L_{\DTb}(\theta_t^{[n]},\phi^{[i]})
\right].
\end{align*}
\vspace{-15pt}
\ENDFOR
\end{algorithmic}
\end{algorithm}

\subsection{MeRLin Learns Transferable Features on Semi-Synthetic Dataset}

We verify that {\merlin} learns transferable features in the semi-synthetic setting of Section~\ref{sec:limitations} where fine-tuning and joint training fail. Figure \ref{fig:semi} (right) shows that \merlin~outperforms fine-tuning and joint training by a large margin and is close to fine-tuning from the source A, which can be almost viewed as an upper bound of any algorithm's performance with AB as the source. Figure \ref{fig:semi2} shows that {\merlin} (trained with source = AB) performs well on A, indicating it learns the transferable features. Figure~\ref{fig:tsne} (\merlin, train\& test) further corroborates the conclusion with the better representations learned by \merlin.  

\section{Theoretical Analysis with Two-layer Quadratic Neural Nets}\label{sec:theory}

The experiments in Section~\ref{sec:limitations} demonstrate the weakness of fine-tuning and joint training. 
On the other hand, {\merlin}  is able to learn the transferable features from the source datasets. In this section, we instantiate transfer learning in a quadratic neural network where the algorithms can be rigorously studied. For a specific data distribution, we prove that (1) fine-tuning and joint training fail to learn transferable features, and (2) MeRLin recovers target ground truth with limited target examples. %We visualize the setting of our analysis in Figure~\ref{fig:setting}.

\paragraph{Models.} Consider a two-layer neural network $f_{\agen, \W}(x) = g_{\theta}(h_{\phi}(x))$ with $g_\theta(z) = \theta^\top z$ and $h_\phi = \sigma(\phi^\top x)$, where $\W = \lbbr\w{1}, \w{2}, \cdots, \w{\df}\rbbr \in \Real^{d\times m}$ is the weight of the first layer, $\agen\in\Real^\df$ is the linear head, and $\sigma(\cdot)$ is element-wise quadratic activation. We consider squared loss $\ell(f_{\agen, \W}(x),y) = (f_{\agen, \W}(x)  - y)^2$.  

\paragraph{Source distribution.} Let $k\in \mathbb{Z}^+$ such that $2\le k\le d$.
We consider the following source distribution which can be solved by \textit{multiple possible feature extractors}. Let $x_{[i]}$ denotes the $i$-th entry of $x\in \Real^d$. 
Let $y=0$ happens with prob. $\nicefrac{1}{3}$, and conditioned on $y=0$, we have $x_{[i]}=0$ for $i\le k$,  and $x_{[i]} \sim \{\pm 1, 0\}$ uniformly randomly and independently for $i>k$. With prob.  $\nicefrac{2}{3}$ we have $y=1$, and conditioned on $y=1$, we have $x_{[i]}\sim\{\pm 1\}$ uniformly randomly and independently for $i\le k$, and $x_{[i]} \sim\{\pm 1, 0\}$ uniformly randomly and independently for $i>k$. 

The design choice here is that $x_{[1]},\dots, x_{[k]}$ are the useful entries for predicting the source label, because $y=x_{[i]}^2$ for any $i\le k$. In other words, features $\sigma(e_{[i]}^\top x)$ for $i\le k$ are useful features to learn for the source domain, and any linear mixture of them works. All other entries of $x$ are independent with the label $y$. 

\paragraph{Target distribution.} The target distribution is exactly $k=1$ version of the source distribution. Therefore, $y=x_{[1]}^2$, and $\sigma(e_{[1]}^\top x)$ is the correct feature extractor for the target. All other $x_{[i]}$ for $i > 1$ are independent with the label. 

%\begin{figure}[t]
%\centering
%\includegraphics[width=0.90\textwidth]{}
%\caption{\textbf{Visualization of the setting in the theoretical analysis.} We visualize the case when $k=2$. Datapoints from $\DS$ and $\DT$ have first two coordinates displayed above, with red and blue denoting labels of $0$ and $1$, respectively. The remaining coordinates are uniform in $\{-1,0,+1\}^{d-2}$. Our analysis will finally be on the regression loss, but to make it easily visualized we depict the classification boundary where the output of the model equals to $0.5$.}
%\label{fig:setting}
%\end{figure}

\paragraph{Source-specific features and transferable features.}
As mentioned before, $\sigma(e_{[1]}^\top x), \cdots, \sigma(e_{[k]}^\top x)$ are all good features for the source, whereas only $\sigma(e_{[1]}^\top x)$ is transferable to the target. 

Since usually the source dataset is much larger than the target dataset, we assume access to \textit{infinite} source data for simplicity, so $\DSe = \DS$. We assume access to $\nt$ target data $\DTe$. 

\paragraph{Regularization:} Because the limited target data, the optimal solutions with unregularized objective are often not unique. Therefore, we study $\ell_2$ regularized version of the baselines and {\merlin}, but we compare them with their own best regularization strength.  Let $\lambda>0$ be the regularization strength. 
The regularized \merlin~objective is $\Lmeta{\lambda}{\as, \W} :=  \lossmeta(\phi, \theta_s)+ \lambda (\norm{\as}^2 + \norm{\W}_F^2)$.
The regularized joint training objective is $\Ljoint{\lambda}{\as,\at,\W}
:= \Ljoint{}{\as,\at,\W}+ \lambda (\norm{\as}^2 + \norm{\at}^2 + \norm{\W}_F^2)\nonumber.$
We also regularize the two objectives in the pre-training and fine-tuning. We pre-train with $\Lsource{\lambda}{\as, \W} := \LossD{ \DS}{\as, \W}  + \lambda (\norm{\as}^2 + \norm{\W}_F^2)$, 
and then only fine-tune the head\footnote{For theoretical analysis we consider only fine-tuning $\at$. It is worth noting that fine-tuning both $\at$ and $\W$ converges to the same solution as target-only training in this setting, which also has large generalization gap due to overfitting.} by minimizing the target loss $\Ltarget{\lambda, \phipre}{\at} := \LossD{\DTe}{\at, \phipre} + \lambda \norm{\at}^2$.

The following theorem shows that neither joint training nor fine-tuning is capable of recovering the target ground truth given limited number of target data.
\begin{theorem}\label{theorem_joint_finetuning}
There exists universal constants $c\in(0,1)$ and $\epsilon>0$, such that so long as $n_t\le cd$, for any $\lambda > 0$, the following statements are true:
	\begin{itemize}[leftmargin=*]
		\setlength{\itemsep}{5pt}
		\setlength{\parskip}{-3pt}
		\item{
With prob. at least $1-4\exp(-\Omega(d))$, the solution $(\hat{\theta}_s,\hat{\theta}_t,\phijoint)$ of the joint training satisfies
			\begin{align}
			\LossD{ \DT}{\hat{\theta}_t, \phijoint}\ge \epsilon.
			\end{align}	}
		\item{
With prob. at least $1-\frac{1}{k}$ (over the randomness of pre-training), the solution $(\hat{\theta}_t, \phipre)$ of the head-only fine-tuning satisfies
			\begin{align}
			\LossD{\DT}{\hat{\theta}_t, \phipre}\ge \epsilon.
			\end{align}	}
	\end{itemize}
\end{theorem}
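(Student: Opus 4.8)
The plan is to reduce everything to the effective quadratic form $M=\sum_{j}\theta_{j}\phi_{j}\phi_{j}^{\top}$, since $f_{\theta,\phi}(x)=\sum_{j}\theta_{j}(\phi_{j}^{\top}x)^{2}=x^{\top}Mx$. First I would compute the target population loss in closed form. Because each coordinate of $x$ is, under $\DT$, an independent uniform draw from $\{-1,0,1\}$ and $y=x_{[1]}^{2}$, writing $\Delta=M-e_{[1]}e_{[1]}^{\top}$ gives
\[
\LossD{\DT}{M}=\E[(x^{\top}\Delta x)^{2}]=\tfrac29\textstyle\sum_{i}\Delta_{ii}^{2}+\tfrac49(\Tr\Delta)^{2}+\tfrac89\sum_{i\ne j}\Delta_{ij}^{2}\ \ge\ \tfrac29\norm{M-e_{[1]}e_{[1]}^{\top}}_{F}^{2}.
\]
Hence target loss below a small constant forces $M\approx e_{[1]}e_{[1]}^{\top}$; in particular an $e_{[1]}$-feature must be present. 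The complementary fact is that on the source, $x_{[i]}^{2}=y$ for every $i\le k$, so a diagonal $M$ supported on coordinates $\le k$ predicts $(\sum_{i\le k}M_{ii})\,y$, while any off-diagonal entry or any coordinate $>k$ only injects $y$-uncorrelated variance. This lets me show the regularized source objective depends on $M$ only through the scalar $\tau=\sum_{i\le k}M_{ii}$, plus a factored penalty.

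For head-only fine-tuning I would characterize $\phipre$. The induced regularizer $\min\{\norm{\theta_{s}}^{2}+\norm{\phi}_{F}^{2}\}$ subject to a fixed rank-one block $w\,uu^{\top}$ equals $\tfrac{3}{2^{2/3}}|w|^{2/3}$, a concave ($2/3$-power) penalty on the eigenvalues. Combined with the previous paragraph, pre-training reduces to minimizing $\tfrac23(\tau-1)^{2}+\lambda\tfrac{3}{2^{2/3}}\sum_{i\le k}|M_{ii}|^{2/3}$ over diagonal $M$ on the first $k$ coordinates; since $t\mapsto t^{2/3}$ is strictly subadditive, for every $\lambda>0$ the minimizer concentrates all mass on a single coordinate $i\le k$ (whereas a convex penalty would always spread it). By the permutation symmetry of $\DS$ over coordinates $1,\dots,k$, symmetric random initialization selects this coordinate uniformly, so $i=1$ with probability exactly $1/k$. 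On the event $i\ge 2$ (probability $1-1/k$) every feature extracted by $\phipre$ is a function of $x_{[i]}$ with $i\ge2$, hence independent of $y$; the head-only predictor is then independent of $y$, and any such predictor has loss at least $\Var(y)=2/9$. Taking $\epsilon=2/9$ gives the second bullet. The delicate point here is the strict-concentration claim, which is exactly where the quadratic (nonlinear) parameterization matters.

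For joint training the mechanism is that the shared feature extractor can memorize the target far more cheaply than it can learn the transferable feature. I would first exhibit a low-cost overfitting construction: use one column $\propto e_{[2]}$ to solve the source and one spurious column $v$ so that $vv^{\top}$ interpolates the $\nt$ target points. Setting $v=X^{+}b$ with $b_{\ell}=\sqrt{y^{(\ell)}}$ (so $(v^{\top}x^{(\ell)})^{2}=y^{(\ell)}$), a concentration argument gives, on an event of probability $1-4\exp(-\Omega(d))$, that the target Gram matrix has smallest eigenvalue $\Omega(d)$ and the points have squared norm $\Theta(d)$, whence $\norm{v}^{2}=b^{\top}(XX^{\top})^{-1}b=O(\nt/d)=O(c)$. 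This is where $\nt\le cd$ enters: the target can be interpolated by a direction of squared norm $O(c)$, so its factored penalty is only $\tfrac{3}{2^{2/3}}O(c)^{2/3}$, driving both empirical losses to zero at total penalty essentially that of the source feature alone (independently of $\alpha$, since both empirical losses vanish).

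The remaining and main obstacle is the matching lower bound: that \emph{any} solution with small target \emph{population} loss is strictly more expensive than the overfitting construction, for every $\lambda>0$. Here small target loss forces $M_{t}\approx e_{[1]}e_{[1]}^{\top}$, so the shared factorization must carry a full-size ($\approx1$) component along $e_{[1]}$; representing a size-$\tau$ source block together with this unit target block (cheapest via a single shared $e_{[1]}$ column) costs $\tfrac{3}{2^{2/3}}(\tau^{2}+1)^{1/3}$, versus $\tfrac{3}{2^{2/3}}(\tau^{2/3}+O(c)^{2/3})$ for overfitting at the same source size $\tau$. Since $\tau^{2/3}<(\tau^{2}+1)^{1/3}$ with a gap at least $2^{1/3}-1$ on the relevant range $\tau\in[0,1]$ (the source loss keeps the optimal $\tau\le1$), choosing $c$ small enough that $O(c)^{2/3}<2^{1/3}-1$ makes overfitting strictly cheaper, contradicting optimality of any generalizing solution and yielding $\LossD{\DT}{\hat{\theta}_{t},\phijoint}\ge\epsilon$. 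Making this rigorous requires a genuine lower bound on the shared Schatten-$2/3$-type penalty $\min\{\norm{\theta_{s}}^{2}+\norm{\theta_{t}}^{2}+\norm{\phi}_{F}^{2}\}$ over all joint factorizations of $(M_{s},M_{t})$---in particular that using a non-$e_{[1]}$ source coordinate cannot beat sharing---which I expect to be the hardest step and the place where the nonconvexity of the factored penalty must be confronted directly.
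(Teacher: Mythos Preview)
Your fine-tuning argument is essentially the paper's: the strictly subadditive $t\mapsto t^{2/3}$ penalty forces the pre-training minimizer to concentrate on a single coordinate $e_j$ with $j\in[k]$, and the permutation symmetry of $\DS$ over $[k]$ gives $j=1$ with probability $1/k$. The paper then computes the exact loss $\tfrac{2}{3}\gamma^2-\tfrac{8}{9}\gamma+\tfrac{2}{3}\ge\tfrac{10}{27}$ for the predictor $\gamma x_{[j]}^2$, but your independence-and-$\Var(y)$ bound $\ge\tfrac29$ is perfectly fine and slightly cleaner.

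For joint training, your overall architecture (overfitting construction vs.\ lower bound on any generalizing solution) is the paper's, and your construction is equivalent: the paper takes $\hat\phi_t=P_Xe_1$, the minimum-norm vector with $\langle\hat\phi_t,x_i^t\rangle=x_{i[1]}^t$, so that $(\hat\phi_t^\top x_i^t)^2=y_i^t$; Lemma~\ref{lemma_random_projection}-style singular-value concentration then gives $\norm{\hat\phi_t}^2=O(n_t/d)$, exactly your $O(c)$.

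The genuine gap is the lower bound, and the paper does \emph{not} try to control the joint Schatten-$2/3$ penalty $\min\{\norm{\theta_s}^2+\norm{\theta_t}^2+\norm{\phi}_F^2:\ M_s,M_t\text{ fixed}\}$ that you flag as the hardest step. Instead it uses a decoupling trick that makes the problem trivial: write
\[
\norm{\theta_s}^2+\norm{\theta_t}^2+\norm{\phi}_F^2
=\Bigl(\norm{\theta_s}^2+\tfrac12\norm{\phi}_F^2\Bigr)
+\Bigl(\norm{\theta_t}^2+\tfrac12\norm{\phi}_F^2\Bigr),
\]
and apply your single-matrix AM-GM bound separately to each half. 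This immediately gives
\[
\norm{\theta_s}^2+\tfrac12\norm{\phi}_F^2\ge\tfrac{3}{2^{4/3}}\Bigl(\textstyle\sum_i|A_{[ii]}|\Bigr)^{2/3},
\qquad
\norm{\theta_t}^2+\tfrac12\norm{\phi}_F^2\ge\tfrac{3}{2^{4/3}}\Bigl(\textstyle\sum_i|B_{[ii]}|\Bigr)^{2/3},
\]
with constant $3/2^{4/3}$ instead of $3/2^{2/3}$. Combined with your target-loss computation (which forces $\Tr B\ge 1-\sqrt{3\epsilon/2}$), this yields the lower bound in Lemma~\ref{lemma_norm_solution}. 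The upper bound from the overfitting construction (two disjoint columns, each paying the full $\norm{\phi_i}^2$) keeps the constant $3/2^{2/3}$. The whole comparison then reduces to the elementary inequality
\[
\frac{1}{2^{4/3}}\Bigl(1-\sqrt{\tfrac{3\epsilon}{2}}\Bigr)^{2/3}>\Bigl(\frac{1}{2^{2/3}}-\frac{1}{2^{4/3}}\Bigr)+\frac{1}{2^{2/3}}\norm{\hat\phi_t}^{4/3},
\]
which holds for $\epsilon$ a small absolute constant once $\norm{\hat\phi_t}^{4/3}=O(c^{2/3})$ is small enough; the first bracket absorbs the mismatch in the two $\min_\mu$ problems since the optimal $\mu\in[0,1]$. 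So the ``nonconvex joint factorization'' obstacle you anticipate never has to be confronted: halving $\norm{\phi}_F^2$ is enough slack to separate the two bounds, and it costs only a fixed constant factor that is budgeted for in the choice of $\epsilon$ and $c$.
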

As will be shown in the proof, not surprisingly, fine-tuning fails because it learns a random feature $\sigma(e_{[i]}^\top x)$ (where $i\in[k]$) for the source during pre-training which does not transfer to the target when $i\ne 1$. Pre-training has no incentive to choose the transferable feature as expected. Joint training fails because it uses one neuron to learn a feature overfitting the target $n_t$ training data exactly, and then use another neuron to learn another feature $\sigma(e_{[i]}^\top x)$ (where $i\in[k]$) to fit the source. In consequence, joint training behaves like training on the source domain and the target domain separately. Training on the source domain does not help learning the target well. The proof of Theorem~\ref{theorem_joint_finetuning} is deferred to Section~\ref{sec:proof}. 

In contrast, the following theorem shows that \merlin~can recover the ground truth of the target task:
\begin{theorem}\label{theorem_meta_transfer}
For any $\lambda < \lambda_0$ where $\lambda_0$ is some universal constant and any failure rate $\xi>0$, if the target set size $\nt >\Theta(\log\frac{k}{\xi})$, with probability at least $1-\xi$, the feature extractor $\phimeta$ found by {\merlin} and the head $\thetahatphimeta$ trained on $\DTa$ recovers the ground truth of the target task: 
	\begin{align}
	\LossD{\DT}{\thetahatphimeta, \phimeta}= 0.
	\end{align}
\end{theorem}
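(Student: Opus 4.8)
The plan is to pass from the parameters $(\theta,\phi)$ to the symmetric \emph{effect matrix} $M_{\theta,\phi}:=\sum_{j=1}^m \theta_j\,\phi_j\phi_j^\top$, using that $f_{\theta,\phi}(x)=x^\top M_{\theta,\phi}\,x$, and first characterize which matrices achieve zero population loss on each domain. Since every coordinate of $x$ lives in a discrete set and, on each domain, the coordinates outside the ``signal block'' are independent of $y$ and uniform on $\{\pm1,0\}$, a Fourier/indicator argument over the support shows: $L_{\DS}(\theta,\phi)=0$ iff $M_{\theta,\phi}=\mathrm{diag}(d_1,\dots,d_k,0,\dots,0)$ with $\sum_{i\le k}d_i=1$, and $L_{\DT}(\theta,\phi)=0$ iff $M_{\theta,\phi}=e_1e_1^\top$. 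Thus recovering the target ground truth is \emph{equivalent} to the inner head producing exactly $e_1e_1^\top$, while solving the source already forces all relevant neuron directions into the block $\{e_1,\dots,e_k\}$, of which only $e_1$ is transferable.

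Next I would upper-bound the optimum by the clean candidate $\phi^\star=e_1$ (all other columns zero) with $\theta_s^\star=1$. This attains $L_{\DS}=0$; moreover the inner least-squares head on $\DTa$ over the single feature $x_{[1]}^2$ equals $1$ whenever $\DTa$ contains at least one example with $y=1$, which holds with probability $\ge 1-3^{-|\DTa|}$, and since $y=x_{[1]}^2$ identically on the target this head has zero loss on $\DTb$ and on the whole target distribution. Hence the candidate has meta-target loss $0$ and objective value at most $C\lambda$ for a universal $C$ (just the regularizer of a unit-scale neuron), so $\min\Lmeta{\lambda}{\cdot}\le C\lambda$.

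The heart of the argument is the matching lower bound. Since the candidate value is $O(\lambda)$ and the source loss enters with coefficient $1$, for $\lambda<\lambda_0$ the minimizer must drive the source term essentially to zero, which by the characterization confines the effect matrix to the diagonal block on $\{e_1,\dots,e_k\}$; minimizing $\|\theta_s\|^2+\|\phi\|_F^2$ then forces the relevant neurons to be axis-aligned with some $e_i$, $i\le k$, because off-axis directions or redundant neurons strictly increase the norm once the source is already solved (a direct factorization-norm computation shows a single axis-aligned neuron is norm-minimal). For each source-specific direction $e_i$ with $2\le i\le k$ the feature $x_{[i]}^2$ is independent of $y$ on the target, so the head the inner loop fits on $\DTa$ cannot generalize: I would show that, conditioned on the sample, any linear predictor built from such a noise feature has $\DTb$-risk bounded below by a universal constant except on an event of probability $\exp(-\Omega(n_t))$, and a union bound over the at most $k$ candidate directions requires $n_t>\Theta(\log\frac{k}{\xi})$ to keep the total failure probability below $\xi$. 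On the complementary event any configuration placing nonzero weight on some $e_i$ with $i\ne1$, or introducing extra features the inner head could overfit, incurs meta-target loss bounded away from $0$, so with $\rho$ fixed its objective exceeds the candidate's $O(\lambda)$; therefore the global minimizer's single relevant direction is exactly $e_1$, its inner head recovers $M=e_1e_1^\top$, and $L_{\DT}(\thetahatphimeta,\phimeta)=0$.

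I expect the main obstacle to be controlling the \emph{unregularized} inner least-squares solution and proving that the source-specific features fail to generalize: on a small $\DTa$ a noise feature $x_{[i]}^2$ can spuriously correlate with $y$, and one must show this spurious fit does not survive on the independent split $\DTb$, uniformly over whichever of the $k-1$ noise coordinates the feature extractor might select—quantifying this failure probability is exactly what yields the $n_t=\Theta(\log\frac{k}{\xi})$ requirement through the union bound. A secondary subtlety is upgrading ``small objective'' to the \emph{exact} identity $M=e_1e_1^\top$ rather than an approximate one; this relies on the discreteness in the first-step characterization, which separates every sub-optimal combinatorial structure from the optimum by a constant gap, so that for $\lambda<\lambda_0$ the unique minimizer is pinned exactly at the neuron aligned with $e_1$.
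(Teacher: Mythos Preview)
Your proposal is essentially the paper's approach: both pass to the effect matrix $M=\sum_j\theta_j\phi_j\phi_j^\top$, characterize the zero-source-loss configurations as diagonal on the first $k$ coordinates, use the $\ell_2$ regularizer to force a single axis-aligned neuron $e_j$ with $j\le k$, and then separate $j=1$ from $j\neq1$ by whether the inner head generalizes on the target, with the union bound over $j\in\{2,\dots,k\}$ producing the $n_t=\Theta(\log(k/\xi))$ requirement.

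The one substantive difference is how the argument is closed. You set up an upper bound via the concrete candidate $(\phi,\theta_s)=(e_1,1)$ with value $O(\lambda)$, then argue that any competing configuration must have small source loss, then small regularizer forces axis alignment, and finally invoke a ``combinatorial gap'' to pass from approximate structure to exact structure. The paper short-circuits all of this: it proves directly (its Lemma on the regularized source loss) that for $\lambda<\lambda_0$ the \emph{exact} minimizers of $\Lsource{\lambda}{\theta_s,\phi}$ are precisely the single-neuron configurations aligned with some $e_j$, $j\le k$, at a specific scale $\mu^*$. Since $\losstarget\ge 0$ and the $j=1$ member of this set achieves $\losstarget=0$ with high probability, the meta-objective minimum equals the source-objective minimum and is attained only at $j=1$. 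This removes the need for your approximate-to-exact upgrade entirely; note in particular that your candidate $(\phi,\theta_s)=(e_1,1)$ is \emph{not} the regularized source minimizer (the optimal scale is $\mu^*<1$), so comparing against its value $C\lambda$ is strictly weaker than comparing against $\min\Lsource{\lambda}{\cdot}$ and makes the lower-bound side harder than necessary. If you restructure to first pin down the exact source minimizers and then select among them by the meta-target term, the ``secondary subtlety'' you flag disappears.
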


Intuitively, \merlin~learns the transferable feature $\sigma(e_{[1]}^\top x)$  because its simultaneously fits the source and enables the generalization of the head on the target.
The proof can be found in Section~\ref{sec:proof}.
\section{Experiments}

We evaluate \merlin~on several vision and NLP datasets. We show that (1) \merlin~consistently improves over baseline transfer learning algorithms including fine-tuning and joint training in both vision and NLP (Section~\ref{sec:results}), and (2) as indicated by our theory, {\merlin} succeeds because it learns features that are more transferable than fine-tuning and joint training (Section~\ref{sec:analysis}).

\subsection{Setup: Tasks, Models,  Baselines, and Our Algorithms}

The evaluation metric for all tasks is the top-1 accuracy. We run all tasks for 3 times and report their means and standard deviations. Further experimental details are deferred to Section~\ref{sec:details}. 

\subsubsection{Datasets and models}

We consider the following four settings. The first three are object recognition problems (with different label sets). The fourth problem is the prominent NLP benchmark where the source is a language modeling task and the targets are classification problems.

\begin{table}
\centering
  \addtolength{\tabcolsep}{-4.2pt}
\caption{\textbf{Accuracy $(\%$)} on computer vision tasks.}\label{table:acc0}
  \begin{tabular}{l|c|c|c|c|c|c}
  \toprule
  Source & Fashion & SVHN & \multicolumn{3}{c|}{ImageNet} & Food-101\\
  \midrule
  Target & \multicolumn{2}{c|}{USPS (600)} & CUB-200 & Caltech-256 & Stanford Cars & CUB-200\\
  \midrule
  Target-only & 91.07 $\pm$ 0.45 & 91.07 $\pm$ 0.45 & 32.05 $\pm$ 0.67 & 45.63 $\pm$ 1.26 & 23.22 $\pm$ 1.02 & 32.13 $\pm$ 0.64\\
  Joint training & 89.59 $\pm$ 0.56 & 91.54 $\pm$ 0.32 & 55.81 $\pm$ 1.36 & 78.20 $\pm$ 0.50 & 63.25 $\pm$ 0.72 & 42.08 $\pm$ 0.59\\
  Fine-tuning & 90.80 $\pm$ 0.20 & 92.12 $\pm$ 0.39 & 72.52 $\pm$ 0.51 & 81.12 $\pm$ 0.27& 81.59 $\pm$ 0.49 & 52.30 $\pm$ 0.51\\
  L2-sp  & 89.74 $\pm$ 0.41 & 91.86 $\pm$ 0.27 & 73.20 $\pm$ 0.38 & 82.31 $\pm$ 0.22 & 81.26 $\pm$ 0.27 & 53.84 $\pm$ 0.37\\
  \midrule
  \textbf{\merlin} & \textbf{93.34} $\pm$ 0.41 & \textbf{93.10} $\pm$ 0.38 & \textbf{75.42} $\pm$ 0.47 & \textbf{82.45} $\pm$ 0.26 & \textbf{83.68} $\pm$ 0.57 & \textbf{58.68} $\pm$ 0.43 \\
  \bottomrule
  \end{tabular}
\end{table}

\begin{table}[t]
\centering
  \addtolength{\tabcolsep}{20pt}
  \caption{\textbf{Accuracy $(\%$)} of BERT-base on GLUE sub-tasks \texttt{dev} set.}\label{table:acc2}
  \begin{tabular}{l|c|c|c}
  \toprule
  Target & MRPC & RTE & QNLI\\
  \midrule
  Fine-tuning & 83.74 $\pm$ 0.93 & 68.35 $\pm$ 0.86 & 91.54 $\pm$ 0.25\\
  L2-sp& 84.31 $\pm$ 0.37 & 67.50 $\pm$ 0.62 & 91.29 $\pm$ 0.36  \\
  \midrule
  \textbf{\merlinfinetune} & \textbf{86.03} $\pm$ 0.25 & \textbf{70.22} $\pm$ 0.86 & \textbf{92.10} $\pm$ 0.27\\
  \bottomrule
  \end{tabular}
\end{table}

\noindent\textbf{SVHN or Fashion-MNIST $\rightarrow$ USPS.}  
We use either SVHN~\citep{cite:NIPS11SVHN} (73K street view house numbers) or Fashion-MNIST \cite{DBLP:journals/corr/abs-1708-07747} (50K clothes) as the source dataset. The target dataset is a random subset of 600 examples of USPS~\citep{cite:TPAMI94USPS}, a hand-written digit dataset. We down-sampled USPS to simulate the setting where the target dataset is much smaller than the source. We use \textit{LeNet} \citep{cite:IEEE98MNIST}, a three-layer ReLU network in this experiment. 

\noindent\textbf{ImageNet $\rightarrow$ CUB-200, Stanford Cars, or Caltech-256.} To validate our method on real-world vision tasks, we use ImageNet \citep{cite:ILSVRC15} as the source dataset. The target dataset is Caltech-256~\citep{cite:TR07Caltech}, CUB-200 \citep{WahCUB_200_2011}, or Stanford Cars \citep{KrauseStarkDengFei-Fei_3DRR2013}. These datasets have 25468, 5994, 8144 labeled examples respectively, much smaller than ImageNet with 1.2M labeled examples. Caltech is a general image classification dataset of 256 classes. Stanford Cars and CUB are fine-grained classification datasets with 196 categories of cars and 200 categories of birds, respectively. 
We use ResNet-18 \citep{cite:CVPR16DRL}. 

\noindent\textbf{Food-101 $\rightarrow$ CUB-200.} Food \citep{bossard14} is a fine-grained classification dataset of 101 classes of food. Here we validate \merlin~when the gap between the source and target is large.

\noindent\textbf{Language modeling \raw~ GLUE.}  Pre-training on language modeling tasks with gigantic text datasets and fine-tuning on labeled dataset such as GLUE~\citep{wang2018glue} is dominant following the success of BERT~\citep{devlin-etal-2019-bert}. We fine-tune BERT with \merlin~and evaluate it on the three tasks of GLUE with the smallest number of labeled examples,
which standard fine-tuning likely overfits.  

\subsubsection{Baselines} (1) \textit{\targetonly}, (2) \textit{fine-tuning}, and (3) \textit{joint-training} have been defined in Section~\ref{sec:preliminaries}. Following standard practice, the initial learning rate of fine-tuning is $0.1\times$ the initial learning rate of pre-training to avoid overfitting. For join training, the overall objective can be formulated as: $(1-\alpha)L_{\DSe}+ \alpha L_{\DTe}$. We tune $\alpha$ to achieve optimal performance. The fourth baseline is (4) \textit{L2-sp} \citep{pmlr-v80-li18a}, which fine-tunes the models with a regularization penalizing the parameter distance to the pre-trained feature extractor. We also tuned the strength the L2-sp regularizer.

\subsubsection{Our method}
\textbf{\merlin.} We perform standard training with cross entropy loss on the source domain while meta-learning the  representation in the target domain as described in Section~\ref{sec:method}. 

\noindent\textbf{\merlinfinetune.} In BERT experiments, training on the source masked language modeling task is prohibitively time-consuming, so we opt to a light-weight variant instead: start from pre-trained BERT, and only meta-learn the representation in the target domain. 

\subsection{Results}\label{sec:results}

Results of digits classification and object recognition are provided in Table \ref{table:acc0}. \merlin~consistently outperforms all baselines. Note that the discrepancy between Fashion-MNIST and USPS is very large, where fine-tuning and joint training perform even worse than target-only. Nonetheless, \merlin~is still capable of harnessing the knowledge from the source domain. On Food-101$\rightarrow$CUB-200, \merlin~improves over fine-tuning by $6.58\%$, indicating that \merlin~helps learn transferable features even when the gap between the source and target tasks is huge.
In Table \ref{table:acc2}, we validate the our method on GLUE tasks. \merlin-ft outperforms standard BERT fine-tuning and L2-sp. Since \merlin-ft only changes the training objective of fine-tuning, it can be easily applied to NLP models.

\begin{wrapfigure}{r}{5.4cm}\centering
\vspace{-40pt}
\includegraphics[width=0.33\textwidth]{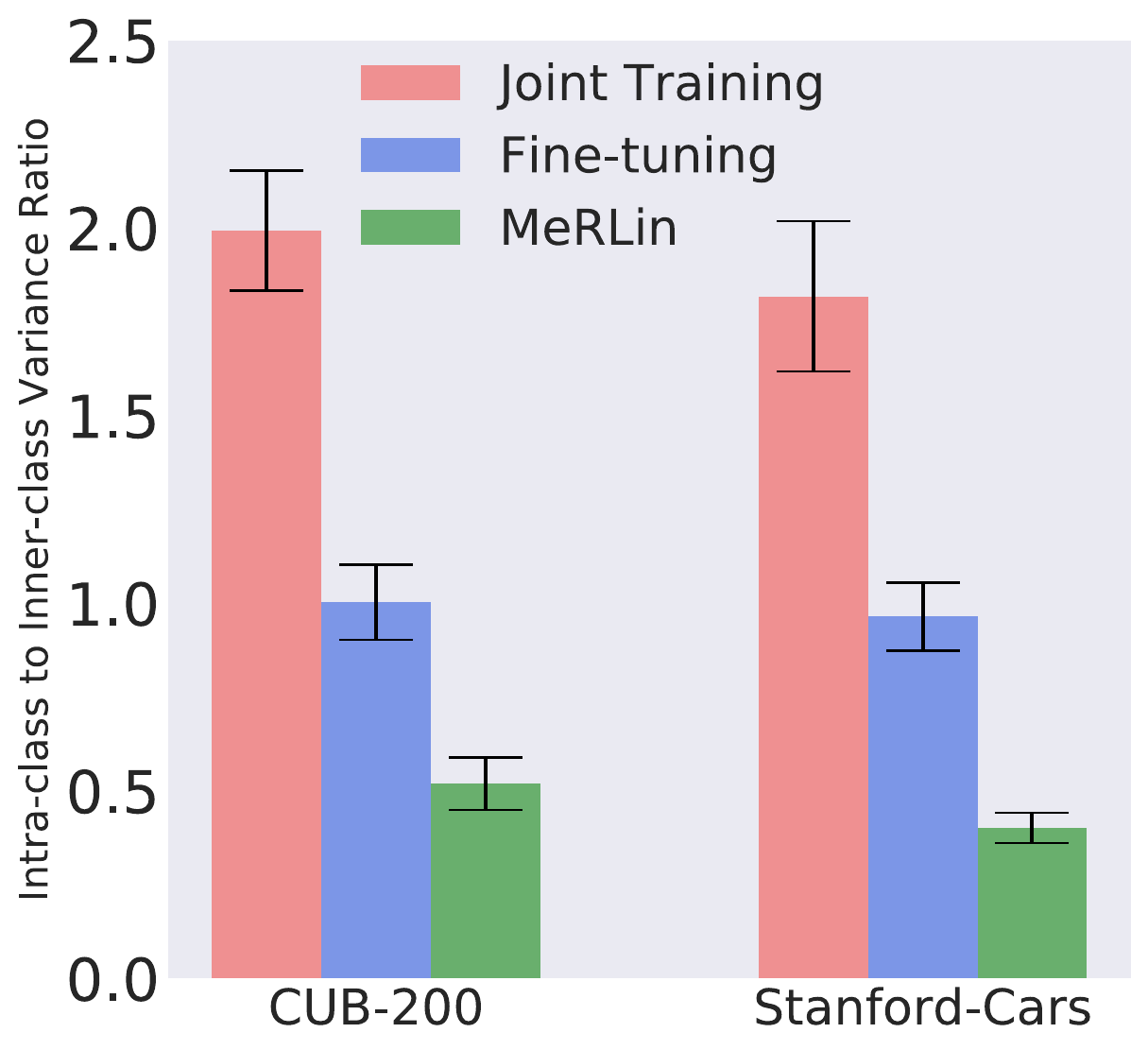}
\vspace{-25pt}
\caption{\textbf{Intra-class to inter-class variance ratio.} This quantity is lowest for~\merlin, indicating that it separates classes best.}
\vspace{-35pt}
\label{fig:ratio}
\end{wrapfigure}

\subsection{Analysis}\label{sec:analysis}

We empirically analyze the representations and verify that \merlin~indeed learns more transferable features than fine-tuning and joint training.

\noindent\textbf{Intra-class to inter-class variance ratio.} Suppose the representation of the j-th example of the i-th class is $\phi_{i,j}$. $\mu_i = \frac{1}{N_i}\sum_{j=1}^{N_i}\phi_{i,j}$, and $\mu = \frac{1}{C}\sum_{i=1}^{C}\mu_i$. Then the intra-class to inter-class variance ratio can be calculated as
$\frac{\sigma^2_{intra}}{\sigma^2_{inner}}=\frac{C}{N}\frac{\sum_{i,j}\|\phi_{i,j}-\mu_i\|^2}{\sum_{i}\|\mu_{i}-\mu\|^2}$.
Low values of this ratio correspond to representations where classes are well-separated. Results on ImageNet $\rightarrow$ CUB-200 and Stanford Cars task are shown in Figure \ref{fig:ratio}. \merlin~reaches much smaller ratio than baselines.
\section{Additional Related Work}

Transfer learning has become one of the underlying factors contributing to the success of deep learning applications. In computer vision, ImageNet pre-training is a common practice for nearly all target tasks. Early works \citep{Oquab_2014_CVPR,pmlr-v32-donahue14} directly apply ImageNet features to downstream tasks. Fine-tuning from ImageNet pre-trained models have become dominant ever since \citep{cite:NIPS14CNN,7298965,Girshick_2014_CVPR,He_2017_ICCV,alex2019big}. On the other hand, transfer learning is also crucial to the success of NLP algorithms. Pre-trained transformers on large-scale language tasks boosts performance on downstream tasks. \citep{NIPS2017_7181,devlin-etal-2019-bert,NIPS2019_8812}.

A recent line of literature casts doubt on the consistency of transfer learning's success \citep{DBLP:journals/corr/HuhAE16,2018ImageNet,NIPS2019_8596,cite:arxivrethink,Kornblith_2019_CVPR,liu2019understanding,neyshabur2020transferred}. \citet{DBLP:journals/corr/HuhAE16} observed that some set of examples in ImageNet are more transferable than the others. \citet{2018ImageNet} found out that the texture of ImageNet is not transferable to some target tasks. Training on the source dataset may also need early stopping to find optimal transferability \citet{liu2019understanding,neyshabur2020transferred}.

Meta-learning, originated from the \textit{learning to learn} idea~\citep{2001Learning,2002Vilalta, maclaurin2015gradient, zoph2016neural}, learns from multiple training tasks models  that can be swiftly adapted to new tasks~\citep{pmlr-v70-finn17a, NIPS2019_8306,nichol2018firstorder}. \citet{Raghu2020Rapid,goldblum2020unraveling} empirically studied the mechanism of  MAML's success. Computationally, our method uses bi-level optimization techniques similar to meta-learning work. E.g., \citet{bertinetto2018metalearning} speeds up the implementation of MAML~\cite{pmlr-v70-finn17a} with closed-form solution of the inner loop, which is a technique that we also use. However, the key difference between our paper from the meta-learning approach is that we only learn from a single target task and evaluate on it. Therefore, conceptually, our algorithm is  closer to the learning-to-learn approach  for hyperparameter optimization~\citep{maclaurin2015gradient, zoph2016neural}, where there is a single distribution that generates the  training and validation dataset. \citet{Raghu2020Rapid,goldblum2020unraveling} empirically studied the success of MAML. \citet{pmlr-v97-balcan19a,tripuraneni2020provable} theoretically studied meta-learning in a few-shot learning setting. 

\section{Conclusion}

We study the limitations of fine-tuning and joint training. To overcome their drawbacks, we propose meta representation learning to learn transferable features. Both theoretical and empirical evidence verify our findings. Results on vision and NLP tasks validate our method on real-world datasets. Our work raises many intriguing questions for further study. Could we apply meta-learning to heterogeneous target tasks? What's more, future work can pay attention to disentangling transferable features from non-transferable features explicitly for better transfer learning. 

\section*{Acknowledgement}
HL thanks Mingsheng Long for discussions of experiments. CW acknowledges support from an NSF Graduate Research Fellowship. TM acknowledges support of Google Faculty Award. 

\bibliography{main}
\bibliographystyle{plainnat}

\newpage
\appendix

\section{Additional Details of Experiments}\label{sec:details}
\subsection{The Semi-synthetic Experiment}\label{sec:semi_details}
The original CIFAR images is of resolution $32\times32$. For the transferable dataset A, we reserve the upper $16\times32$ and fill the lower half with $[0.485, 0.456, 0.406]$ for the three channels (the mean of CIFAR-10 images). For the non-transferable dataset B, the lower part $16\times32$ pixels are generated with i.i.d. gaussian distribution with the upper half filled with $[0.485, 0.456, 0.406]$ similarly. To make the non-transferable part related to the labels, we set the mean of the gaussian distribution to $0.1\times c$, where $c$ is the class index of the image. The variance of the gaussian noise is set to $0.2$. We always clamp the images to $[0,1]$ to make the generated images valid. For the source dataset, we use $49500$ CIFAR-10 images, while for the target, we use the other $500$ to avoid memorizing target examples. 

We use ResNet-32 implementation provided in \url{github.com/akamaster/pytorch_resnet_cifar10}. We set the initial learning rate to $0.1$, and decay the learning rate by $0.1$ after every 50 epochs.  We use t-SNE \citep{Hinton2008Visualizing} visualizations provided in \texttt{sklearn}. The perplexity is set to 80.

\subsection{Implementation on Real Datasets}
We implement all models on PyTorch with 2080Ti GPUs. All models are optimized by SGD with 0.9 momentum. For digit classification tasks, the initial learning rate is set to 0.01, with $5\times10^{-4}$ weight decay. The batch-size is set to 64. We run each model for 150 epochs. For object recognition tasks, ImageNet pre-trained models can be found in \texttt{torchvision}. We use a batch size of 128 on the source dataset and 512 on the target dataset. The initial learning rate is set to 0.1 for training from scratch and 0.01 for ImageNet initialization. We decay the learning rate by 0.1 every 50 epochs until 150 epochs. The weight decay is set to $5\times10^{-4}$. For GLUE tasks, we follow the standard practice of \citet{devlin-etal-2019-bert}. The BERT model is provided in \url{github.com/Meelfy/pytorch_pretrained_BERT}.
For each model, we set the head (classifier) to the top one linear layer.
We use a batch size of 32. The learning rate is set to $5\times 10^{-5}$ with 0.1 warmup proportion. During fine-tuning, the initial learning rate is 10 times smaller than training from scratch following standard practice. The hyper-parameter $\rho$ is set to $2$, and $\lambda$ is found with cross validation. We also provide the results of varying $\rho$ and $\lambda$ in Section \ref{sec:sensitivity}. 

\subsection{Implementing the Speed-up Version}\label{sec:speedup}
\textbf{Practical implementation: speeding up with MSE loss.} 
Training the head $g_\theta$ in the inner loop of meta learning can be time-consuming. Even using implicit function theorem or implicit gradients as proposed in \citep{NIPS2019_8358, NIPS2019_8306,lin2020modelbased}, we have to approximate the inverse of Hessian. To solve the optimization issues, we propose to analytically calculate the prediction of the linear head ${\theta}_t$ and directly back-prop to the feature extractor $h_\phi$. Thus, we only need to compute the gradient once in a single step. Concretely, suppose we use MSE-loss. Denote by $\mathbf{H}\in\mathbb{R}^{\frac{n_t}{2}\times m}$ the feature matrix of the $\frac{n_t}{2}$ target samples in the target meta-training set $\DTa$. Then $\widehat{\theta}_t$ in \eqref{eqn:inner} can be analytically computed as $\widehat{\theta}_t=(\mathbf{H}\mathbf{H}^\top+\lambda\mathbf{I})^{-1}\mathbf{y}$, where $\lambda$ is a hyper-parameter for regularization. The objective of the outer loop can be directly computed as
\begin{equation}\label{eqn:speedup}
\mathop{\textup{minimize}}_{\phi\in\boldsymbol{\Phi},\theta_s\in\boldsymbol{\Theta}}~ J(\phi, \theta_s) =
L_{\DSe}(\theta_s,\phi)+\rho\frac{2}{n_t}\sum_{i=1}^{\frac{n_t}{2}}\ell(g_{(\mathbf{H}\mathbf{H}^\top+\lambda\mathbf{I})^{-1}\mathbf{y}}\circ h_{\phi}(x_i^{t'}),y_i^{t'}).
\end{equation}

We implement the speed-up version on classification tasks following \citet{NIPS2019_9025}. We treat the classification problems as multi-variate ridge regression. Suppose we have label $c\in\{1,2,\cdots,10\}$. Then the target encoding for regression is $-0.1\times\mathbf{1}+\mathbf{e}_c$. For example, if the label is $3$, then the encoding will be $(-0.1,-0.1,0.9,\cdots,-0.1)$. Then the parameters of the target head in the inner loop can be computed as $\widehat{\theta}_t=(\mathbf{H}\mathbf{H}^\top+\lambda\mathbf{I})^{-1}\mathbf{Y}$. We then compute the MSE loss on the target validation set: $\|\mathbf{H}_{val}\widehat{\theta}_t-\mathbf{Y}_{val}\|_2^2$ in the outer loop. We summarize the details of the vanilla version and the speed-up version in Algorithm~\ref{alg:merlin} and Algorithm~\ref{alg:merlin_speed}.

\begin{algorithm}[h]
\caption{Meta Representation Learning (\merlin): speed-up implementation.}
\label{alg:merlin_speed}
\begin{algorithmic}[1]
\STATE {\bfseries Input:} the source dataset $\DSe$ and the evolving target dataset $\DTe$.    
\STATE {\bfseries Output:} learned representations $\W$.
\FOR{$i=0$ {\bfseries to} \texttt{MaxIter}} 
  \STATE Randomly sample target train set $\DTa$ and target validation set $\DTb$ from $\DTe$.
    \STATE Analytically calculate the solution of target head $\thetahatphi$ in the inner loop
    \vspace{-5pt}
    \begin{align*}
    \widehat{\theta}(\phi^{[i]})=(\mathbf{H}^{[i]}\mathbf{H}^{[i]\top}+\lambda\mathbf{I})^{-1}\mathbf{Y}.
    \end{align*} 
\vspace{-15pt}
\STATE In the outer loop, update the representation $\W$ and the source head $\as$:
\vspace{-5pt}
\begin{align*}
(\W^{[i+1]},\theta_s^{[i+1]}) \leftarrow (\W^{[i+1]},\theta_s^{[i+1]}) - \eta\nabla_{(\W^{[i]},\theta_s^{[i]})}\left[ L_{\DSe}(\theta_s^{[i]},\phi^{[i]})+\rho\frac{2}{n_t}\left\|\mathbf{H}^{[i]}_{val}\widehat{\theta}_t(\phi^{[i]})-\mathbf{Y}_{val}\right\|_2^2
\right].
\end{align*}
\vspace{-15pt}
\ENDFOR
\end{algorithmic}
\end{algorithm}

\subsection{Datasets}
We provide details and links of datasets below.

\textbf{CUB-200} \cite{WahCUB_200_2011} is a fine-grained dataset of 200 bird species. The training dataset consists of 5994 images and the test set consists of 5794 images. \url{http://www.vision.caltech.edu/visipedia/CUB-200-2011.html}

\textbf{Stanford Cars} \cite{KrauseStarkDengFei-Fei_3DRR2013} dataset contains 16,185 images of 196 classes of cars. The data is split into 8,144 training images and 8,041 testing images. \url{http://ai.stanford.edu/~jkrause/cars/car_dataset.html}

\textbf{Food-101} \cite{bossard14} is a fine-grained dataset of 101 kinds of food, with 750 training images and 250 test images for each kind. \url{http://www.vision.ee.ethz.ch/datasets_extra/food-101/}

\textbf{Caltech-256} is a object recognition dataset of 256 categories. In our experiments, the training set consists of 25468 images, and the test set consists of 5139 images.\url{http://www.vision.caltech.edu/Image_Datasets/Caltech256/}

\textbf{MNIST} \cite{cite:IEEE98MNIST} is a dataset of hand-written digits. It has a training set of 60,000 examples, and a test set of 10,000 examples. \url{http://yann.lecun.com/exdb/mnist/}

\textbf{SVHN} \cite{cite:NIPS11SVHN} is a real-world image dataset of street view house numbers. It has 73257 digits for training, 26032 digits for testing.  \url{http://ufldl.stanford.edu/housenumbers/}

\subsection{Further Ablation Study.} 
We extend the last column of Table~\ref{table:acc2} in Table~\ref{table:ablation}. We further compare with two variants of \merlin~as ablation study: 

\textbf{\merlin~(pre-trained).} We first pre-train the model on the source dataset and then optimize the \merlin~objective starting from the pre-trained solution.

\textbf{\merlin-target-only.} \merlin-target-only only meta-learns representations on the target domain starting from random initialization. We test whether the meta-learning objective itself has regularization effect.

\begin{table}[htbp]

\addtolength{\tabcolsep}{-2.5pt}
\caption{Accuracy on Food $\rightarrow$ CUB.}
\label{table:ablation}
\centering
\begin{small}
\begin{tabular}{l|c|c|c|c|c|c}
\toprule
Algorithm & Target-only & Fine-tuning & Joint Training & \merlin-target-only & \merlin~(pre-trained) & \merlin \\
\midrule
Accuracy & 32.10 $\pm$ 0.64 & 52.30 $\pm$ 0.51 & 42.08 $\pm$ 0.59 & 40.17 $\pm$ 0.44 & 55.26 $\pm$ 0.43 &  58.68 $\pm$ 0.43\\
\bottomrule
\end{tabular}
\end{small}
\end{table}

\merlin~(pre-trained) performs worse than \merlin, but it still improves over fine-tuning and joint training. Note that \merlin~(pre-trained) only need to train on ImageNet for 2 epochs, much shorter than joint training. \merlin-target-only improves target-only by $8\%$, indicating that meta-learning helps avoid overfitting even without the source dataset.
\subsection{Feature-label correlation}

\begin{figure}[h]
  \centering
  \subfigure[Comparison of feature-label correlation.]{
   \includegraphics[width=0.55\textwidth]{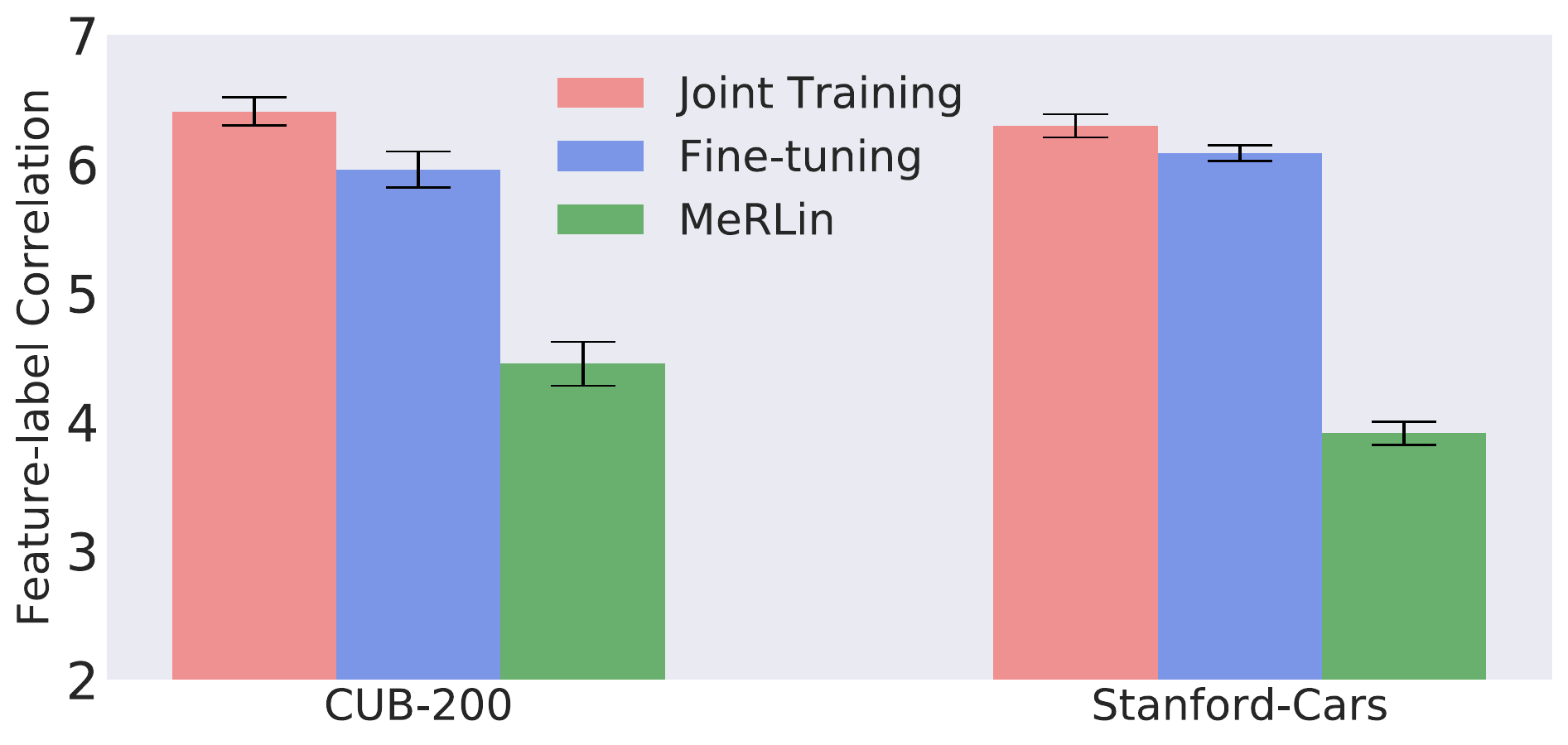}
    \label{fig:correlation}
    }
    \hspace{10pt}
      \subfigure[Sensitivity to hyper-parameters.]{
    \includegraphics[width=0.345\textwidth]{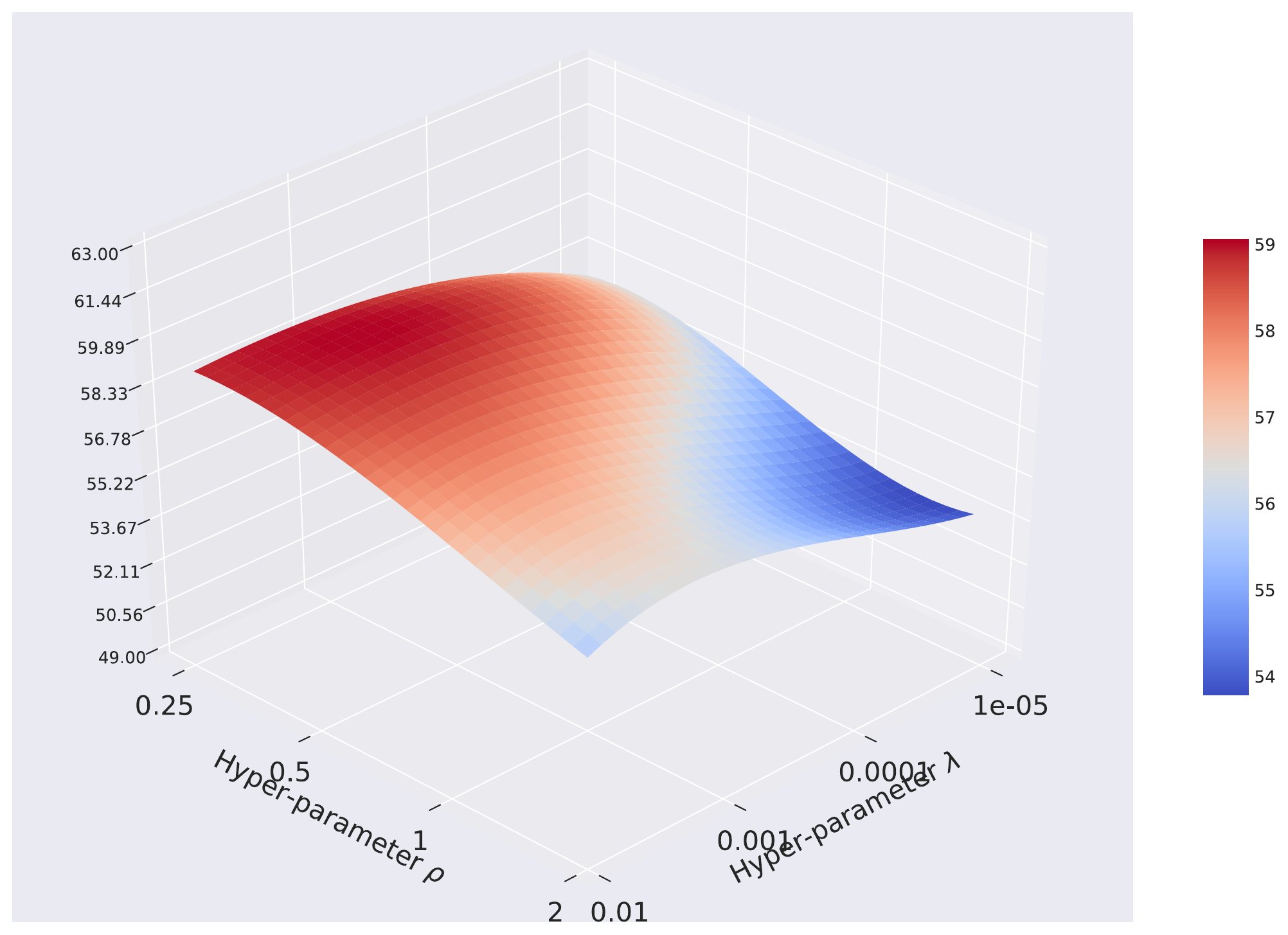}
    \label{fig:hyper}
    }
 \caption{(a) \textbf{Analysis of Feature Quality.} Comparison of feature-label correlation. A lower quantity is better, and~\merlin~has the lowest value. (b) textbf{Sensitivity of the proposed method to hyper-parameters.} We test the accuracy on Food-101$\rightarrow$CUB-200 with varying $\rho$ and $\lambda$ and provide the visualization. }
  \label{fig2}
\end{figure}

Suppose the feature matrix is $\mathbf{H}$, and the label vector is $\mathbf{y}$, then the correlation between feature and label can be defined as $\mathbf{y}^\top\left(\mathbf{H}^\top\mathbf{H}\right)^{-1}\mathbf{y}$. As is shown by \citet{pmlr-v97-arora19a,NIPS2019_9266}, this term is closely related to the generalization error of neural networks, with a smaller quantity indicating better generalization. We calculate $\mathbf{y}^\top\left(\mathbf{H}^\top\mathbf{H}\right)^{-1}\mathbf{y}$ on ImageNet $\rightarrow$ CUB-200 and Stanford Cars. As shown in Figure \ref{fig:correlation}, the features learned by \merlin~are more closely related to labels than fine-tuning and joint training, indicating \merlin~is indeed learning more transferable features compared with baselines.

\subsection{Sensitivity of the Proposed Method to Hyper-parameters.}\label{sec:sensitivity}

We test the model on Food-101$\rightarrow$CUB-200 with varying hyper-parameters $\rho$ and $\lambda$. Results in Figure \ref{fig:hyper} indicate that the model is not sensitive to varying $\rho$ and $\lambda$. Intuitively, larger $\rho$ indicates more emphasis on the target meta-task. When $\rho$ approaches $0$, the performance of MRL is approaching fine-tuning. $\lambda$ exerts regularization to the classifier in the inner loop training. It is also note worthy that $\lambda$ can avoid the problem that $\mathbf{H}\mathbf{H}^\top$ is occasionally invertible. Without $\lambda$ the model can fail to converge sometimes.
\clearpage
\newpage
\section{Missing Details in Section \ref{sec:theory}}\label{sec:proof}

\subsection{Proof of Theorem~\ref{theorem_joint_finetuning}}
\begin{lemma}\label{lemma_norm_solution}
	Suppose $0\le\epsilon\le \frac{2}{3}$. For each solution $\as, \at, \W$ satisfying $\Exp_{x,y\sim \DT}\lbbr\ell(f_{\at, \W}(x),y)\rbbr \leq \epsilon$, the joint training objective $\Ljoint{\lambda}{\as, \at, \W}$ is lower bounded: 
	\begin{align*}
	\Ljoint{\lambda}{\as, \at, \W} &=(1-\alpha) \LossD{\DS}{\as, \W}+ \alpha\LossD{\DTe}{\at, \W} + \lambda \lbr\norm{\as}^2 + \norm{\at}^2 + \norm{\W}_F^2\rbr\\
	&\ge \min_\mu \lbr \frac{3\lambda}{2^{4/3}}|\mu|^{2/3} + \frac{2}{3}(1-\alpha)(\mu-1)^2\rbr + \frac{3\lambda}{2^{4/3}} \lbr 1-\sqrt{\frac{3\epsilon}{2}}\rbr^{2/3}.
	\end{align*}
\end{lemma}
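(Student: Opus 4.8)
The plan is to translate the quadratic network into the symmetric quadratic form it computes, bound the source loss and the regularizer separately in terms of a single scalar "source signal'' $\mu$, and then minimize over $\mu$. With the elementwise square activation, $f_{\theta,\W}(x)=\sum_j \theta_j(\w{j}^\top x)^2 = x^\top M x$ where $M=\sum_j \theta_j \w{j}\w{j}^\top$. The essential feature of joint training is that the two heads share $\W$: writing $M_s=\sum_j \theta_{s,j}\w{j}\w{j}^\top$ and $M_t=\sum_j \theta_{t,j}\w{j}\w{j}^\top$, these are built from the same neurons $\w{j}$ but different coefficients $\theta_{s,j},\theta_{t,j}$. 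I would first record two elementary loss bounds obtained by conditioning on the event $y=1$ (which has probability $\tfrac23$) and applying Jensen. Setting $\mu:=\E_{\DS}[x^\top M_s x\mid y=1]$ and $\nu:=\E_{\DT}[x^\top M_t x\mid y=1]$, dropping the $y=0$ contribution gives $\LossD{\DS}{\as,\W}\ge \tfrac23(\mu-1)^2$, and the hypothesis $\LossD{\DT}{\at,\W}\le\epsilon$ gives $\tfrac23(\nu-1)^2\le\epsilon$, hence $\nu\ge 1-\sqrt{3\epsilon/2}\ge 0$ since $\epsilon\le\tfrac23$. A short computation from the two distributions shows $\mu=\sum_{i\le k}(M_s)_{ii}+\tfrac23\sum_{i>k}(M_s)_{ii}$ and an analogous expression for $\nu$ singling out coordinate $1$; the only facts I need downstream are that $\mu=\sum_j \theta_{s,j}a_j$ and $\nu=\sum_j \theta_{t,j}b_j$ with $0\le a_j,b_j\le \norm{\w{j}}^2$.

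\textbf{The core: a lower bound on the regularizer.} The heart of the argument is to show $\norm{\as}^2+\norm{\at}^2+\norm{\W}_F^2\ge \tfrac{3}{2^{4/3}}\bigl(|\mu|^{2/3}+\nu^{2/3}\bigr)$. I would set $s_j=|\theta_{s,j}|$, $t_j=|\theta_{t,j}|$, $p_j=\norm{\w{j}}^2$, so that $|\mu|\le\sum_j s_j p_j$, $|\nu|\le\sum_j t_j p_j$, and the regularizer equals $\sum_j(s_j^2+t_j^2+p_j)$. Writing $r_j=\sqrt{s_j^2+t_j^2}$, the three-term AM--GM split $r_j^2+\tfrac{p_j}{2}+\tfrac{p_j}{2}\ge 3\,(r_j^2 p_j^2/4)^{1/3}=3\cdot 2^{-2/3}(r_j p_j)^{2/3}$, followed by subadditivity of the concave map $u\mapsto u^{2/3}$ (vanishing at $0$), yields $\sum_j(r_j^2+p_j)\ge 3\cdot 2^{-2/3}\bigl(\sum_j r_j p_j\bigr)^{2/3}$. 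Minkowski's inequality applied to the vectors $(s_j p_j,\,t_j p_j)$ gives $\sum_j r_j p_j\ge \sqrt{(\sum_j s_j p_j)^2+(\sum_j t_j p_j)^2}\ge\sqrt{\mu^2+\nu^2}$, so the regularizer is at least $3\cdot 2^{-2/3}(\mu^2+\nu^2)^{1/3}$.

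\textbf{Splitting and finishing.} To recover the two advertised terms I would prove the scalar inequality $3\cdot 2^{-2/3}(\mu^2+\nu^2)^{1/3}\ge \tfrac{3}{2^{4/3}}(|\mu|^{2/3}+\nu^{2/3})$, which after multiplying out and cubing reduces (with $p=|\mu|^{2/3}$, $q=\nu^{2/3}$, using $\nu\ge0$) to $4(p^3+q^3)\ge (p+q)^3$, i.e. $(p-q)^2(p+q)\ge 0$. Since $\nu\ge 1-\sqrt{3\epsilon/2}\ge0$, monotonicity lets me replace $\nu^{2/3}$ by $(1-\sqrt{3\epsilon/2})^{2/3}$. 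Combining with the source-loss bound gives $\Ljoint{\lambda}{\as,\at,\W}\ge \tfrac23(1-\alpha)(\mu-1)^2+\tfrac{3\lambda}{2^{4/3}}|\mu|^{2/3}+\tfrac{3\lambda}{2^{4/3}}(1-\sqrt{3\epsilon/2})^{2/3}$, where I also used $\alpha\LossD{\DTe}{\at,\W}\ge 0$. Minimizing the first two $\mu$-dependent terms over the free scalar $\mu$ produces exactly the claimed right-hand side.

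\textbf{Main obstacle.} The delicate step is the regularizer bound, precisely because $\W$ is shared between $M_s$ and $M_t$: a single neuron serves both heads, so one cannot simply pay the representation cost $\norm{\W}_F^2$ twice and add two independent single-task bounds. The correct, amortized accounting is what forces the constant $2^{-4/3}$ rather than the single-task $2^{-2/3}$, and it emerges only after combining the two head contributions through Minkowski and invoking $(p-q)^2(p+q)\ge 0$, whose equality case $|\mu|=\nu$ makes the constant tight. The two conditional-expectation loss bounds and the final one-dimensional minimization are routine by comparison.
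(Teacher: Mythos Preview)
Your argument is correct and complete. The route, however, differs from the paper's in one essential point: how to amortize the shared representation cost $\norm{\W}_F^2$ across the two heads.

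The paper simply \emph{splits} $\norm{\W}_F^2$ in half and applies the single-task AM--GM bound twice: from $\norm{\as}^2+\tfrac12\norm{\W}_F^2$ it gets $\tfrac{3\lambda}{2^{4/3}}|\mu|^{2/3}$ (the constant $2^{-4/3}$ arising from $a^2+\tfrac{b}{4}+\tfrac{b}{4}\ge 3\cdot 2^{-4/3}(ab)^{2/3}$), and from $\norm{\at}^2+\tfrac12\norm{\W}_F^2$ it gets the second term, using the constraint on the target population loss to lower-bound the relevant trace. Summing these two independent bounds immediately yields the stated inequality. The paper also carries out a block decomposition of the source loss to isolate the cross terms, but your shortcut of dropping the $y=0$ contribution and applying Jensen on the $y=1$ conditional recovers exactly the same scalar $\mu$ and the same inequality $\LossD{\DS}{\as,\W}\ge\tfrac{2}{3}(\mu-1)^2$.

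Your approach instead keeps $\norm{\W}_F^2$ intact, merges the two heads via $r_j=\sqrt{s_j^2+t_j^2}$, applies AM--GM once, then uses the triangle inequality in $\Real^2$ and the power-mean inequality $4(p^3+q^3)\ge(p+q)^3$ to split into the two advertised terms. This is a genuine alternative: it makes the equality case $|\mu|=\nu$ transparent and would generalize more naturally to more than two heads, at the cost of one extra nontrivial inequality. The paper's halving trick is shorter and more direct for the two-head case; your route is a bit more work but arguably more illuminating about why $2^{-4/3}$ is the right constant.
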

\begin{proof}[Proof of Lemma~\ref{lemma_norm_solution}]

	Define $d\times d$ matrix 
\begin{align}
A = \sum_{i=1}^{\df} \as_i \w{i}\w{i}^\top. 
\end{align}

Define $x_{[1:k]}$ and $x_{[k+1:d]}$ be the first $k$ and last $d-k$ dimensions of $x$. $A_{k,k}$, $A_{k, \bar{k}}$ and $A_{\bar{k}, \bar{k}}$ be $k\times k$, $k\times (d-k)$ and $(d-k)\times (d-k)$ matrices that correspond to the upper left, upper right and lower right part of $A$. For a random vector $x$ where the first $k$ dimensions are uniformly independently from $\{\pm 1\}$, the last $d-k$ dimensions are uniformly indepdently from $\{0, \pm1\}$, define random variables
$A_1 = x_{[1:k]}^\top A_{k,k} x_{[1:k]}$, $A_2 = x_{[k+1:d]}^\top A_{\bar{k},\bar{k}} x_{[k+1:d]}$. (Note that  $x$ is defined on a different distribution than $\DS$.)

We have bound
\begin{align}\label{eq:source_part_lb}
&  (1-\alpha)\Exp_{x,y\sim \DS}\lbbr\ell(f_{\as, \W}(x),y)\rbbr + \lambda \lbr\norm{\as}^2 + \frac{1}{2}\norm{\W}_F^2\rbr \\
= &(1-\alpha)\Exp_{x,y\sim \DS} \lbbr \lbr x_{[1:k]}^\top A_{k,k} x_{[1:k]} + 2x_{[1:k]}^\top A_{k,\bar{k}} x_{[k+1:d]}  +   x_{[k+1:d]}^\top A_{\bar{k},\bar{k}} x_{[k+1:d]} - y\rbr^2 \rbbr\nonumber
\\& \quad\quad\quad+ \lambda \lbr\norm{\as}^2 + \frac{1}{2}\norm{\W}_F^2\rbr\\
\ge & (1-\alpha)\Exp_{x,y\sim \DS} \lbbr \lbr x_{[1:k]}^\top A_{k,k} x_{[1:k]} +  x_{[k+1:d]}^\top A_{\bar{k},\bar{k}} x_{[k+1:d]} - y\rbr^2 \rbbr+\lambda \lbr\norm{\as}^2 + \frac{1}{2}\norm{\W}_F^2\rbr\\
= &(1-\alpha)\lbr \frac{2}{3}\Exp\lbbr \lbr A_1 - 1 \rbr^2 \rbbr + \frac{4}{3}\Exp \lbbr  (A_1-1) A_2\rbbr + \Exp\lbbr A_2^2\rbbr\rbr+ \lambda \lbr\norm{\as}^2 + \frac{1}{2}\norm{\W}_F^2\rbr\\
= & (1-\alpha)\lbr\frac{2}{3} \Exp\lbbr\lbr  A_1 +  A_2  - 1 \rbr^2 \rbbr+ \frac{1}{3} \Exp\lbbr A_2^2\rbbr\rbr+ \lambda \lbr\norm{\as}^2 + \frac{1}{2}\norm{\W}_F^2\rbr\\
\ge & \frac{2}{3}(1-\alpha) \lbr\Exp\lbbr A_1 +  A_2 \rbbr - 1 \rbr^2 + \frac{3\lambda}{2^{4/3}} \lbr|\Exp\lbbr A_1 + A_2 \rbbr|\rbr^{2/3}
\end{align}
The first inequality is because 
\begin{align}
\Exp_{x,y\sim \DS}\lbbr \lbr x_{[1:k]}^\top A_{k,k} x_{[1:k]} \rbr\lbr x_{[1:k]}^\top A_{k,\bar{k}} x_{[k+1:d]} \rbr \rbbr = 0,
\end{align}
\begin{align}
\Exp_{x,y\sim \DS}\lbbr \lbr  x_{[k+1:d]}^\top A_{\bar{k},\bar{k}} x_{[k+1:d]}  \rbr\lbr x_{[1:k]}^\top A_{k,\bar{k}} x_{[k+1:d]} \rbr \rbbr = 0,
\end{align}
\begin{align}
\Exp_{x,y\sim \DS}\lbbr \lbr x_{[1:k]}^\top A_{k,\bar{k}} x_{[k+1:d]} \rbr y  \rbbr = 0.
\end{align}
The second inequality is because
\begin{align}
\sum_{i=1}^{\df} \lbr \lbr\as_i\rbr^2 + \frac{1}{2}\norm{\w{i}}^2 \rbr 
\ge& \frac{3}{2^{4/3}} \sum_{i=1}^{\df} \lbr |\as_i| \cdot \norm{\w{i}}^2\rbr^{2/3}\\
\ge& \frac{3}{2^{4/3}}  \lbr \sum_{i=1}^{\df} |\as_i| \cdot \norm{\w{i}}^2 \rbr^{2/3}\\
\ge& \frac{3}{2^{4/3}}  \lbr \sum_{i=1}^{\df} |A_{[i,i]}| \rbr^{2/3},
\end{align}
where the first inequality is AM-GM inequality, the second inequality is by concavity of $(\cdot)^{2/3}$. The third inequality is because for diagonal matrix $D$ that has $1$ at $(i,i)$ if $A_{[i,i]}\ge0$, $-1$ at $(i,i)$ if $A_{[i,i]}< 0$, we have
\begin{align}
\sum_{i=1}^{\df} |A_{[i,i]}|  = tr(AD) =  \sum_{i=1}^{\df} \as_i \w{i}^\top D \w{i} \le \sum_{i=1}^{\df} |\as_i| \cdot \norm{\w{i}}^2 .
\end{align}

On the other hand, for the target, we define $d\times d$ matrix 
\begin{align}
B = \sum_{i=1}^{\df} \at_i \w{i}\w{i}^\top. 
\end{align}

Define $x_{[1]}$ and $x_{[2:d]}$ be the first $1$ and last $d-1$ dimensions of $x$. $B_{1,1}$, $B_{1, \bar{1}}$ and $B_{\bar{1}, \bar{1}}$ be $1\times 1$, $1\times (d-1)$ and $(d-1)\times (d-1)$ matrices that correspond to the upper left, upper right and lower right part of $B$. For a random vector $x$ where the first dimension is uniformly independently from $\{\pm 1\}$, the last $d-1$ dimensions are uniformly indepdently from $\{0, \pm1\}$, define random variables
$B_1 = x_{[1]}^\top B_{1,1} x_{[1]}$, $B_2 = x_{[2:d]}^\top B_{\bar{1},\bar{1}} x_{[2:d]}$. (Note that  $x$ is defined on a different distribution than $\DT$.)

Using similar argument as above, we have
\begin{align}
\Exp_{x,y\sim \DT}\lbbr\ell(f_{\at, \W}(x),y)\rbbr \ge \frac{2}{3} \lbr\Exp\lbbr B_1 +  B_2 \rbbr - 1 \rbr^2.
\end{align}
However, we know that $\Exp_{x,y\sim \DT}\lbbr\ell(f_{\at, \W}(x),y)\rbbr \leq \epsilon$, so there has to be 
\begin{align}
	\Exp\lbbr B_1 + B_2 \rbbr  \ge 1-\sqrt{\frac{3\epsilon}{2}},
\end{align}
therefore we have
\begin{align}\label{eq:target_part_lb}
\lambda\lbr \norm{\at}^2 + \frac{1}{2}\norm{\W}_F^2 \rbr \ge \frac{3\lambda}{2^{4/3}} \lbr1-\sqrt{\frac{3\epsilon}{2}}\rbr^{2/3}.
\end{align}
Summing up Equation~\ref{eq:source_part_lb} and Equation~\ref{eq:target_part_lb} finishes the proof.

\end{proof}

\begin{lemma}\label{lemma_norm_upperbound}
	Assume $\hat{\phi}_t$ is a vector such that $\langle \hat{\phi}_t, x^t_i \rangle = x^t_{i[1]}$ for all $x^t_i\in \DTe$, then there exists some solution $(\as, \at, \W)$ such that 
	\begin{align}\label{eq:lemma_norm_upperbound_1}
		\Ljoint{\lambda}{\as, \at, \W} \le \min_\mu \lbr \frac{3\lambda}{2^{2/3}}|\mu|^{2/3} + \frac{2}{3}(1-\alpha)(\mu-1)^2\rbr + \frac{3\lambda}{2^{2/3}} \norm{\hat{\phi}_t}^{4/3}_2.
	\end{align}
\end{lemma}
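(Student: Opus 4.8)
The plan is to prove this upper bound constructively, by exhibiting a single feasible solution $(\as,\at,\W)$ whose objective value already attains the right-hand side. The guiding idea---which mirrors the ``joint training cheats'' narrative of the section---is to let the network split its work across two dedicated neurons: one \emph{source neuron} that fits the source through the transferable quadratic feature, and one \emph{target neuron} that uses $\hat{\phi}_t$ to fit the target training set exactly. Explicitly, I would set $\w{1}=a\,e_1$ (with $e_1$ the first standard basis vector) and $\w{2}=c\,\hat{\phi}_t$, zeroing out any remaining columns of $\W$, and make the two heads disjointly supported: $\as$ nonzero only in its first coordinate (equal to $b$) and $\at$ nonzero only in its second coordinate (equal to $e$). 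Because each head sees only its own neuron, the source prediction is $b a^2 x_{[1]}^2$ and the target prediction on each $x^t_i\in\DTe$ is $e c^2(\hat{\phi}_t^\top x^t_i)^2$, so the two tasks decouple and there are no cross terms.

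I would then evaluate the three pieces of $\Ljoint{\lambda}{\as,\at,\W}$ on this solution. Setting $\mu := b a^2$, the source prediction equals $\mu x_{[1]}^2=\mu y$ on $\DS$ (because $x_{[1]}^2=y$ there, as $1\le k$), so $\LossD{\DS}{\as,\W}=\Exp_{\DS}\lbbr(\mu-1)^2 y^2\rbbr=\tfrac23(\mu-1)^2$ using $\Exp\lbbr y^2\rbbr=\tfrac23$. For the target, the hypothesis $\langle\hat{\phi}_t,x^t_i\rangle=x^t_{i[1]}$ gives $(\hat{\phi}_t^\top x^t_i)^2=(x^t_{i[1]})^2=y^t_i$, so taking $e c^2=1$ makes the target prediction equal to $y^t_i$ on every target training point and forces $\LossD{\DTe}{\at,\W}=0$, killing the $\alpha$-weighted target term entirely.

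It remains to handle the regularizer, which separates as $\lambda(b^2+a^2)+\lambda(e^2+c^2\norm{\hat{\phi}_t}^2)$ since $\norm{\W}_F^2=\norm{\w{1}}^2+\norm{\w{2}}^2$. For the source block I would minimize $b^2+a^2$ subject to $b a^2=\mu$; writing $a^2=\tfrac{a^2}{2}+\tfrac{a^2}{2}$ and applying AM--GM gives the minimum $\tfrac{3}{2^{2/3}}|\mu|^{2/3}$, attained at $b^2=a^2/2$. The identical argument minimizes $e^2+c^2\norm{\hat{\phi}_t}^2$ subject to $e c^2=1$, yielding $\tfrac{3}{2^{2/3}}\norm{\hat{\phi}_t}^{4/3}$. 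Summing loss and regularizer, the objective equals $\tfrac{3\lambda}{2^{2/3}}|\mu|^{2/3}+\tfrac23(1-\alpha)(\mu-1)^2+\tfrac{3\lambda}{2^{2/3}}\norm{\hat{\phi}_t}^{4/3}$ for any admissible $\mu$, and choosing $\mu$ to be the minimizer of the first two terms reproduces exactly the claimed bound~\eqref{eq:lemma_norm_upperbound_1}.

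There is no real obstacle in this argument: the witness is explicit and the only computations are two applications of weighted AM--GM together with the elementary moment $\Exp\lbbr y^2\rbbr=\tfrac23$. The one point deserving care---and the whole reason the source and target costs add without interaction---is placing $\as$ and $\at$ on separate neurons; this is precisely the ``memorize the target with its own neuron'' solution flagged in the text, and the companion lower bound (Lemma~\ref{lemma_norm_solution}) is what later converts this perfect training fit into a genuine failure to generalize on $\DT$.
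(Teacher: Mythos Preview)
Your proposal is correct and follows essentially the same approach as the paper: both construct the identical two-neuron witness $\w{1}\propto e_1$, $\w{2}\propto\hat{\phi}_t$ with disjointly supported heads, verify that the target empirical loss vanishes and the source loss equals $\tfrac23(1-\alpha)(\mu-1)^2$, and then optimize the scalar parameters via AM--GM to obtain the two $\tfrac{3\lambda}{2^{2/3}}(\cdot)^{2/3}$ regularizer terms. The only cosmetic difference is that the paper plugs in the optimal scalings $(\sqrt{2}\mu^*)^{1/3}$, $(\mu^*/2)^{1/3}$, etc.\ directly, whereas you carry free parameters $a,b,c,e$ and minimize at the end.
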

\begin{proof}[Proof of Lemma~\ref{lemma_norm_upperbound}]
	Assume $\mu^* \in \argmin_\mu \lbr \frac{3\lambda}{2^{2/3}}|\mu|^{2/3} + \frac{2}{3}(1-\alpha)(\mu-1)^2\rbr$, then obviously $\mu^* \in [0, 1]$. Let $\phi_1 = (\sqrt{2}\mu^*)^{1/3}e_1$, $\phi_2 = \frac{2^{1/6}}{\norm{\hat{\phi}_t}_2^{1/3}}\hat{\phi}_t$, $\phi_i=0$ for $i>2$, $\as = (\mu^*/2)^{1/3}e_1$ and $\at = \frac{\norm{\hat{\phi}_t}^{2/3}}{2^{1/3}} e_2$. Now we prove that this model satisfies the Equation~\ref{eq:lemma_norm_upperbound_1}.
	
	First of all, we notice that for any $x^t_i \in \DTe$, there is
	\begin{align}
		&{x^t_i}^\top\lbr\sum_{i=1}^{m} \at_i \phi_{i}\phi_{i}^\top\rbr x^t_i\\
		=& {x^t_i}^\top\at_2 \phi_{2}\phi_{2}^\top x^t_i\\
		=& \langle \hat{\phi}_t, x^t_i \rangle^2\\
		=& y^t_i.
	\end{align}
	Therefore we have 
	\begin{align}\label{eq:lemma_norm_upperbound_2}
	\LossD{\DTe}{\at, \W} =0
	\end{align}.
	
	On the other hand, we have
	\begin{align}\label{eq:lemma_norm_upperbound_3}
		&(1-\alpha)\LossD{\DS}{\as, \W} + \lambda\lbr\norm{\as}^2 +\norm{\phi_1}^2\rbr\\
		=&\frac{2}{3}(1-\alpha)(\mu^*-1)^2 + \frac{3\lambda}{2^{2/3}} |\mu^*|^{2/3}\\
		=& \min_\mu \lbr \frac{3\lambda}{2^{2/3}}|\mu|^{2/3} + \frac{2}{3}(1-\alpha)(\mu-1)^2\rbr.
	\end{align}
	Plugging Equation~\ref{eq:lemma_norm_upperbound_2} and Equation~\ref{eq:lemma_norm_upperbound_3} into the formula of $\Ljoint{\lambda}{\as, \at, \W} $ finishes the proof.
\end{proof}

\begin{lemma}\label{lemma_random_projection}
	Let $X\in\Real^{d\times n}$ be a random matrix where each entry is uniformly random and independently sample from $\{0, \pm1\}$, $n<\frac{d}{2}$. 
	Let $P_X e_1$ be the projection of $e_1$ to the column space of $X$.
	Then, there exists absolute constants $c_0>0$ and $C>0$, such that with probability at least $1-4\exp(-Cd)$, there is 
	\begin{align}
		\norm{P_X e_1}_2 \le c_0 \sqrt{\frac{n}{d}}.
	\end{align}
\end{lemma}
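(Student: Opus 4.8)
The plan is to reduce the statement to a lower bound on the smallest singular value of $X$. On the event that $X$ has full column rank, the orthogonal projection onto its column space is $P_X = X(X^\top X)^{-1}X^\top$, and since $P_X$ is symmetric and idempotent,
\[
\|P_X e_1\|_2^2 = e_1^\top P_X e_1 = (X^\top e_1)^\top (X^\top X)^{-1}(X^\top e_1) = v^\top (X^\top X)^{-1} v,
\]
where $v = X^\top e_1 \in \Real^n$ is the transpose of the first row of $X$. Its entries $x_{1j}$ are i.i.d. uniform on $\{0,\pm1\}$, so $\|v\|_2^2 = \sum_{j=1}^n x_{1j}^2 \le n$ holds \emph{deterministically}, as each summand lies in $\{0,1\}$. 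Bounding the quadratic form by the operator norm of $(X^\top X)^{-1}$ then gives
\[
\|P_X e_1\|_2^2 \le \frac{\|v\|_2^2}{\lambda_{\min}(X^\top X)} \le \frac{n}{\sigma_{\min}(X)^2}.
\]
Hence it suffices to prove that $\sigma_{\min}(X)^2 \ge c\,d$ with probability at least $1-4\exp(-Cd)$ for an absolute constant $c>0$; on that event $X$ automatically has full column rank, and taking $c_0 = 1/\sqrt{c}$ yields $\|P_X e_1\|_2 \le c_0\sqrt{n/d}$.

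To lower bound $\sigma_{\min}(X)$ I would use a standard net argument. Fix a unit vector $u\in S^{n-1}$ and write $\|Xu\|_2^2 = \sum_{i=1}^d \langle X_{i,:}, u\rangle^2$, a sum of $d$ independent terms. Since the entries are bounded (hence sub-Gaussian) with second moment $\tfrac23$, each $\langle X_{i,:},u\rangle$ has variance $\tfrac23\|u\|_2^2 = \tfrac23$, so $\mathbb{E}\|Xu\|_2^2 = \tfrac23 d$, and a Bernstein-type lower-tail inequality gives $\Pr\!\big[\|Xu\|_2^2 \le \tfrac13 d\big] \le \exp(-c_1 d)$. I would then upgrade this pointwise estimate to a uniform one over the sphere: take an $\epsilon$-net of $S^{n-1}$ of size at most $(3/\epsilon)^n$, union-bound the tail estimate over the net, and pass from net points to all of $S^{n-1}$ using the Lipschitz bound $\big|\,\|Xu\|_2 - \|Xu'\|_2\big| \le \sigma_{\max}(X)\,\|u-u'\|_2$, where $\sigma_{\max}(X) = O(\sqrt d)$ holds on the same high-probability event by an analogous upper-tail concentration.

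The main obstacle, and the place where the hypothesis $n<d/2$ is essential, is closing this net argument: the covering entropy $n\log(3/\epsilon) \le \tfrac{d}{2}\log(3/\epsilon)$ must be dominated by the concentration exponent $c_1 d$ so that the union bound leaves probability $1-\exp(-\Omega(d))$. Because $n/d$ is bounded away from $1$, the aspect ratio leaves a positive gap between $\sqrt d$ and $\sqrt n$ (in the spirit of the Bai–Yin / Marchenko–Pastur edge $\sigma_{\min}(X)\approx \sqrt{\tfrac23}(\sqrt d-\sqrt n)$), which is exactly what forces $\sigma_{\min}(X)^2 = \Omega(d)$. Concretely I would either tune $\epsilon$ and the deviation level so that $c_1 d$ beats $\tfrac{d}{2}\log(3/\epsilon)$, or invoke the standard non-asymptotic estimate $\sigma_{\min}(X)\ge \sqrt{\tfrac23}\,(\sqrt d - C\sqrt n) - t$ for matrices with i.i.d.\ sub-Gaussian entries, which with $n<d/2$ and $t=\Theta(\sqrt d)$ yields $\sigma_{\min}(X)^2 \ge c\,d$ with probability $1-2\exp(-Cd)$. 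Combining this singular-value event with the deterministic bound $\|v\|_2^2\le n$ gives $\|P_X e_1\|_2 \le \sqrt{n/(cd)} = c_0\sqrt{n/d}$, with the various small-probability failure events absorbed into the stated $4\exp(-Cd)$, completing the proof.
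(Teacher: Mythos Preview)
Your approach is correct and follows the same overall strategy as the paper: bound $\|P_X e_1\|$ in terms of the extreme singular values of $X$, then invoke standard non-asymptotic concentration for sub-Gaussian random matrices (the paper cites Rudelson--Vershynin's results directly rather than redoing the net argument).

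The one genuine difference is your first step. The paper writes $\|P_X e_1\|_2 = \|X(X^\top X)^{-1}X^\top e_1\|_2$ and bounds this by $s_{\max}(X)\, s_{\min}(X)^{-2}\,\|X^\top e_1\|_2 \le s_{\max}(X)\, s_{\min}(X)^{-2}\sqrt{n}$, so it needs \emph{both} an upper bound on $s_{\max}$ and a lower bound on $s_{\min}$, and then uses $n<d/2$ to control the ratio $(\sqrt d+\sqrt n)/(\sqrt d-\sqrt n)^2$. Your use of idempotence, $\|P_X e_1\|_2^2 = e_1^\top P_X e_1 \le \|X^\top e_1\|_2^2 / \sigma_{\min}(X)^2$, is cleaner: it eliminates the $s_{\max}$ factor entirely and reduces the problem to a single lower bound $\sigma_{\min}(X)^2 \ge c\,d$. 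Either way the singular-value input comes from the same source, but your reduction is tighter and shorter.
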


\begin{proof}[Proof of Lemma~\ref{lemma_random_projection}]
	Let $s_{min}(X)$ and $s_{max}(X)$ be the minimal and maximal singular values of $X$ respectively. Then we have
	\begin{align}
		\norm{P_X e_1}_2 &= \norm{X(X^\top X)^{-1}X^\top e_1}_2\\
		&\le \norm{X}_{op} \norm{(X^\top X)^{-1}}_{op} \norm{X^\top e_1}_2\\
		&\le s_{max}(X) (s_{min}(X))^{-2} \sqrt{n}.
	\end{align}	

	By Theorem 3.3 in~\cite{rudelson2010non}, there exists constants $c_1, c_2>0$, such that 
	\begin{align}
		P(s_{min}(X) \le c_1(\sqrt{d}-\sqrt{n})) \le 2\exp(-c_2d).
	\end{align}
	By Proposition 2.4 in~\cite{rudelson2010non}, there exists constants $c_3, c_4>0$, such that
	\begin{align}
P(s_{max}(X) \ge c_3(\sqrt{d}+\sqrt{n})) \le 2\exp(-c_4d).
   \end{align}
	Let $C=min\{c_2,c_4\}$, then with probability at least $1-4\exp(Cd)$, there is
	\begin{align}
		& s_{max}(X) (s_{min}(X))^{-2} \sqrt{n}\\
		\le & \frac{c_3}{c_1^2} \frac{\sqrt{d}+\sqrt{n}}{(\sqrt{d} - \sqrt{n})^2} \sqrt{n}\\
		\le &  \frac{(2+\sqrt{2})c_3}{(\sqrt{2}-1)^2 c_1^2}\sqrt{\frac{n}{d}},
	\end{align}
	which completes the proof.

\end{proof}

\begin{proof}[Proof of Theorem~\ref{theorem_joint_finetuning}]
	We prove the joint training part of Theorem~\ref{theorem_joint_finetuning} following this intuition: (1) the total loss of each solution with target loss $\Exp_{x,y\sim \DT}\lbbr\ell(f_{\at, \W}(x),y)\rbbr \leq \epsilon$ is lower bounded as indicated by Lemma~\ref{lemma_norm_solution}, and (2) there exists a solution with loss smaller than the aforementioned lower bound as indicated by Lemma~\ref{lemma_norm_upperbound}.  

	By Lemma~\ref{lemma_norm_solution}, for any $\as, \at, \W$ satisfying $\Exp_{x,y\sim \DT}\lbbr\ell(f_{\at, \W}(x),y)\rbbr \leq \epsilon$, the joint training loss $\Ljoint{\lambda}{\as, \at, \W}$ is lower bounded, 
	\begin{align}\label{eq:joint_proof_lower_bound}
	\Ljoint{\lambda}{\as, \at, \W} \ge \min_\mu \lbr \frac{3\lambda}{2^{4/3}}|\mu|^{2/3} + \frac{2}{3}(1-\alpha)(\mu-1)^2\rbr + \frac{3\lambda}{2^{4/3}} \lbr 1-\sqrt{\frac{3\epsilon}{2}}\rbr^{2/3}.
	\end{align}

	Let $P_X e_1$ be the projection of vector $e_1$ to the subspace spanned by the target data. According to Lemma~\ref{lemma_norm_upperbound}, there exists some solution $(\as, \at, \W)$ such that 
	\begin{align}\label{eq:joint_proof_upper_bound}
	\Ljoint{\lambda}{\as, \at, \W} \le \mu \lbr \frac{3\lambda}{2^{2/3}}|\mu|^{2/3} + \frac{2}{3}(1-\alpha)(\mu-1)^2\rbr + \frac{3\lambda}{2^{2/3}} \norm{\hat{\phi}_t}^{4/3}_2.
	\end{align}
	
	Let $\epsilon_0>0$ be a constant such that $\frac{1}{2^{4/3}}\lbr 1-\sqrt{\frac{3\epsilon_0}{2}}\rbr^{2/3} > \frac{1}{2^{2/3}} - \frac{1}{2^{4/3}}$. According to Lemma~\ref{lemma_random_projection}, there exists absolute constants $c\in(0,1)$, $C>0$, such that so long as $n_t\le cd$, there is with probability at least $1-4\exp(-Cd)$, 
	\begin{align}\label{eq:theorem_joint_finetuning_1}
		\frac{1}{2^{4/3}}\lbr 1-\sqrt{\frac{3\epsilon_0}{2}}\rbr^{2/3} > \frac{1}{2^{2/3}} - \frac{1}{2^{4/3}} + \frac{1}{2^{2/3}} \norm{\hat{\phi}_t}^{4/3}_2.
	\end{align}

	Now we prove the upper bound in Equation~\ref{eq:joint_proof_upper_bound} is smaller than the lower bound in Equation~\ref{eq:joint_proof_lower_bound}. This is because 
	\begin{align}
		&\min_\mu \lbr \frac{3\lambda}{2^{2/3}}|\mu|^{2/3} + \frac{2}{3}(1-\alpha)(\mu-1)^2\rbr + \frac{3\lambda}{2^{2/3}} \norm{\hat{\phi}_t}^{4/3}_2\\
		= &\min_\mu \lbr \frac{3\lambda}{2^{2/3}}|\mu|^{2/3} + \frac{2}{3}(1-\alpha)(\mu-1)^2\rbr - 3\lambda(\frac{2^{2/3} - 1}{2^{4/3}}) + \frac{3\lambda}{2^{2/3}} \norm{\hat{\phi}_t}^{4/3}_2 +3\lambda(\frac{1}{2^{2/3}} - \frac{1}{2^{4/3}})\\
		\le & \min_\mu \lbr \frac{3\lambda}{2^{4/3}}|\mu|^{2/3} + \frac{2}{3}(1-\alpha)(\mu-1)^2\rbr + \frac{3\lambda}{2^{2/3}} \norm{\hat{\phi}_t}^{4/3}_2 +3\lambda(\frac{1}{2^{2/3}} - \frac{1}{2^{4/3}})\\
		\le & \min_\mu \lbr \frac{3\lambda}{2^{4/3}}|\mu|^{2/3} + \frac{2}{3}(1-\alpha)(\mu-1)^2\rbr + \frac{3\lambda}{2^{4/3}} \lbr 1-\sqrt{\frac{3\epsilon_0}{2}}\rbr^{2/3},
	\end{align}
	where the first inequality uses that fact that $|\mu|<1$ for the optimal $\mu$, the second inequality is by Equation~\ref{eq:theorem_joint_finetuning_1}. This completes the proof for joint training.
	
Then, we prove the result about fine-tuning. According to Lemma~\ref{lemma_source_solution}, any minimizer $(\hat{\as}, \phipre)$ of  $\Lsource{\lambda}{\as, \W} $ either satisfies $\phipre=0$, or only one $\w{i}$ is non-zero but looks like (up to scaling) $e_j$ for $j\in[k]$. When $\phipre=0$, there is 
    \begin{align}
	\Exp_{x,y\sim \DT} \lbbr \ell(f_{\hat{\theta}_t, \phipre}(x),y )\rbbr = \frac{2}{3} >\frac{10}{27}.
	\end{align}	
	When only one $\w{i}$ is non-zero but looks like $e_j$ for $j\in[k]$, since all the first $k$ dimensions are equivalent for the source task, with probability $1-\frac{1}{k}$, this dimension is $j\ne1$. The target funciton fine-tuned on this $\phipre$ looks like $f_{\hat{\at}, \phipre}(x) = \gamma x_j^2$ for some $\gamma\in\Real$, so there is 
	\begin{align}
		\Exp_{x,y\sim \DT} \lbbr \ell(f_{\hat{\theta}_t, \phipre}(x),y ) \rbbr &= \Exp_{x,y\sim \DT} \lbbr (\gamma x_t^2 - x_1^2)^2 \rbbr\\
		&= \gamma^2 \Exp_{x,y\sim \DT} \lbbr x_t^4\rbbr -2\gamma\Exp_{x,y\sim \DT} \lbbr x_t^2 x_1^2\rbbr + \Exp_{x,y\sim \DT} \lbbr x_1^4\rbbr\\
		&= \frac{2}{3} \gamma^2 - \frac{8}{9}\gamma +\frac{2}{3} \ge \frac{10}{27}.
	\end{align}
	Combining these two possibilities finishes the proof for fine-tuning. Finnaly, setting $\epsilon = min\{\epsilon_0, \frac{10}{27}\}$ finishes the proof of Theorem~\ref{theorem_joint_finetuning}.
\end{proof}

\subsection{Proof of Theorem~\ref{theorem_meta_transfer}}

\begin{lemma}\label{lemma_source_solution}
	Define the source loss as 
	\begin{align}
	\Lsource{\lambda}{\as, \W} = \LossD{ \DS}{\as, \W}  + \lambda \lbr\norm{\as}^2 + \norm{\W}_F^2\rbr\nonumber.
	\end{align}
	Then, for any $\lambda>0$, any minimizer of $\Lsource{\lambda}{\as, \W}$ is one of the following cases:
	\begin{enumerate}[label=(\roman*)]
		\item $\as=0$ and $\W=0$.
		\item  for one $i\in[\df]$, $\as_i >0$, $\w{i} = \pm (\sqrt{2} \as_i)\cdot e_j$ for some $j\le k$; for all other  $i\in[\df]$, $|\as_i| = \norm{\w{i}} = 0$.
	\end{enumerate}
	Furthermore, when $0<\lambda<0.1$, all the minimizers look like (ii).
\end{lemma}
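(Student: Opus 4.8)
The plan is to bound the regularized source objective below by a one-dimensional function of a single scalar ``effective scale'' and then show that equality forces the claimed single-neuron structure. Writing the quadratic network as $f_{\as,\W}(x)=x^\top A x$ with $A=\sum_{i=1}^{\df}\as_i\w{i}\w{i}^\top$, I would first reuse the block decomposition from the proof of Lemma~\ref{lemma_norm_solution}. Conditioning on $y$ (so that the signal coordinates $x_{[1:k]}$ vanish when $y=0$) and using that the off-diagonal block $A_{k,\bar k}$ enters through an independent mean-zero cross term, the population source loss becomes
\begin{align*}
\LossD{\DS}{\as,\W}=\tfrac23\,\Exp\lbbr (A_1+A_2-1)^2\rbbr+\tfrac13\,\Exp\lbbr A_2^2\rbbr+\tfrac{16}{9}\norm{A_{k,\bar k}}_F^2,
\end{align*}
where $A_1=x_{[1:k]}^\top A_{k,k}x_{[1:k]}$ and $A_2=x_{[k+1:d]}^\top A_{\bar k,\bar k}x_{[k+1:d]}$ on the auxiliary $\{\pm1\}/\{0,\pm1\}$ distribution. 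The useful identity is $x_{[i]}^2=y$ for every $i\le k$, so the diagonal of $A_{k,k}$ contributes exactly $\mathrm{tr}(A_{k,k})\,y$ to the prediction, whereas off-diagonal signal entries, the whole noise block, and $A_{k,\bar k}$ only inject mean-zero variance.

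Next I would lower bound each piece. By Jensen, $\Exp\lbbr (A_1+A_2-1)^2\rbbr\ge(\mathrm{tr}(A_{k,k})+\tfrac23\mathrm{tr}(A_{\bar k,\bar k})-1)^2$ and $\Exp\lbbr A_2^2\rbbr\ge\tfrac49\mathrm{tr}(A_{\bar k,\bar k})^2$, while the three-term AM-GM $\as_i^2+\norm{\w{i}}^2\ge\tfrac{3}{2^{2/3}}(|\as_i|\norm{\w{i}}^2)^{2/3}$, followed by subadditivity of $t\mapsto t^{2/3}$ and the triangle inequality, gives $\norm{\as}^2+\norm{\W}_F^2\ge\tfrac{3}{2^{2/3}}|\mathrm{tr}(A)|^{2/3}$. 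Setting $c:=\mathrm{tr}(A_{k,k})$ and $r:=\mathrm{tr}(A_{\bar k,\bar k})$, these combine, once one argues $r=0$ at the optimum, to show $\Lsource{\lambda}{\as,\W}\ge g(c)$ for $g(c):=\tfrac23(c-1)^2+\tfrac{3\lambda}{2^{2/3}}|c|^{2/3}$, hence $\Lsource{\lambda}{\as,\W}\ge\min_{c\ge0}g(c)$. I would then realize this bound with a single neuron: taking one $\w{i}=\pm(\sqrt2\as_i)e_j$ with $j\le k$ and $\as_i=(c/2)^{1/3}$ yields $A=c\,e_je_j^\top$, prediction $c\,x_j^2=c\,y$, loss $\tfrac23(c-1)^2$, and regularizer exactly $\tfrac{3\lambda}{2^{2/3}}c^{2/3}$, matching $g(c)$.

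Tracking equality through the chain pins down every minimizer. Equality in AM-GM forces $\norm{\w{i}}^2=2\as_i^2$; equality in subadditivity forces a single active neuron; equality in the triangle step fixes a consistent sign; and equality in the two Jensen steps forces $A_{\bar k,\bar k}=0$, $A_{k,\bar k}=0$, and $A_{k,k}$ diagonal with a single nonzero entry at some $j\le k$. This is precisely case (ii) when the global minimizer $c^*$ of $g$ is positive (with $c^*=2\as_i^3>0$), and collapses to case (i), $\as=0$, $\W=0$, when $c^*=0$ (where paying zero extra regularizer additionally forces $\W=0$). For the ``furthermore'' claim I would locate $\arg\min_{c\ge0}g(c)$: since $g(0)=\tfrac23$ while $g(1)=\tfrac{3\lambda}{2^{2/3}}<\tfrac23$ for $\lambda<0.1$, the global minimizer is strictly positive in that regime, so $\W=0$ cannot be optimal and every minimizer is of type (ii).

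The step I expect to be the main obstacle is justifying $r=0$ (no mass on noise coordinates) inside the global lower bound. Noise mass contributes to the bias with weight only $\tfrac23$ yet is taxed at full rate by the regularizer, and, more importantly, injects the strictly positive variance $\tfrac{4}{27}r^2$, so intuitively it can never be part of a minimizer. Making this airtight rather than merely plausible is delicate because $\mathrm{tr}(A)=c+r$ while the prediction scale is $c+\tfrac23 r$, so one cannot simply substitute $c$ for $\mathrm{tr}(A)$ in the regularizer bound; I would handle it with a sign-aware refinement of the AM-GM/triangle steps or an explicit exchange argument that reassigns any noise-block mass onto a signal coordinate and verifies this strictly decreases both the fit and the regularizer.
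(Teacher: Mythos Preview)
Your overall strategy is the same as the paper's, and your equality tracking and the ``furthermore'' argument are correct. The obstacle you flag in the last paragraph is the one genuine gap, and the fix is much simpler than the exchange argument you propose: you weakened the regularizer bound one step too early. After AM--GM and subadditivity you have
\[
\norm{\as}^2+\norm{\W}_F^2 \;\ge\; \frac{3}{2^{2/3}}\Bigl(\sum_{i=1}^{\df}|\as_i|\,\norm{\w{i}}^2\Bigr)^{2/3}
\;\ge\; \frac{3}{2^{2/3}}\Bigl(\sum_{j=1}^{d}|A_{[j,j]}|\Bigr)^{2/3},
\]
where the last step uses the diagonal sign matrix $D$ with $D_{jj}=\sign(A_{[j,j]})$ and $\sum_i|\as_i|\,\norm{\w{i}}^2\ge\sum_i\as_i\w{i}^\top D\w{i}=\mathrm{tr}(AD)=\sum_j|A_{[j,j]}|$. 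Do \emph{not} collapse this to $|\mathrm{tr}(A)|=|c+r|$. Instead observe that
\[
\sum_{j=1}^{d}|A_{[j,j]}|
\;=\;\sum_{j\le k}|A_{[j,j]}|+\sum_{j>k}|A_{[j,j]}|
\;\ge\;\Bigl|\sum_{j\le k}A_{[j,j]}\Bigr|+\tfrac{2}{3}\Bigl|\sum_{j>k}A_{[j,j]}\Bigr|
\;\ge\;\Bigl|c+\tfrac{2}{3}r\Bigr|
\;=\;\bigl|\Exp[A_1+A_2]\bigr|.
\]
The point is that the noise diagonal entries are taxed with weight $1$ in the regularizer but only need weight $\tfrac{2}{3}$ to match the prediction scale, so the inequality goes the right way automatically. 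Now both the regularizer lower bound and the Jensen lower bound on the loss are functions of the \emph{same} scalar $\mu:=\Exp[A_1+A_2]$, and you get $\Lsource{\lambda}{\as,\W}\ge g(\mu)$ directly, with no separate argument for $r=0$ needed. The $r=0$ conclusion instead falls out of the equality analysis: equality in the displayed chain forces $A_{[j,j]}=0$ for all $j>k$, and dropping $\tfrac{1}{3}\Exp[A_2^2]$ forces $A_{\bar k,\bar k}=0$ entirely.
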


\begin{proof}[Proof of lemma~\ref{lemma_source_solution}]
	Define $d\times d$ matrix 
	\begin{align}
	A = \sum_{i=1}^{\df} \as_i \w{i}\w{i}^\top. 
	\end{align}
	
	Define $x_{[1:k]}$ and $x_{[k+1:d]}$ be the first $k$ and last $d-k$ dimensions of $x$. $A_{k,k}$, $A_{k, \bar{k}}$ and $A_{\bar{k}, \bar{k}}$ be $k\times k$, $k\times (d-k)$ and $(d-k)\times (d-k)$ matrices that correspond to the upper left, upper right and lower right part of $A$. For a random vector $x$ where the first $k$ dimensions are uniformly independently from $\{\pm 1\}$, the last $d-k$ dimensions are uniformly indepdently from $\{0, \pm1\}$, define random variables
	$A_1 = x_{[1:k]}^\top A_{k,k} x_{[1:k]}$, $A_2 = x_{[k+1:d]}^\top A_{\bar{k},\bar{k}} x_{[k+1:d]}$. (Note that  $x$ is defined on a different distribution than $\DS$.)
	
	The loss part of $\Lsource{\lambda}{\as, \W} $ can be lower bounded by:
	\begin{align}
	&  \Exp_{x,y\sim \DS}\lbbr\ell(f_{\as, \W}(x),y)\rbbr \\
	= &\Exp_{x,y\sim \DS} \lbbr \lbr x_{[1:k]}^\top A_{k,k} x_{[1:k]} + 2x_{[1:k]}^\top A_{k,\bar{k}} x_{[k+1:d]}  +   x_{[k+1:d]}^\top A_{\bar{k},\bar{k}} x_{[k+1:d]} - y\rbr^2 \rbbr\\
	\ge & \Exp_{x,y\sim \DS} \lbbr \lbr x_{[1:k]}^\top A_{k,k} x_{[1:k]} +  x_{[k+1:d]}^\top A_{\bar{k},\bar{k}} x_{[k+1:d]} - y\rbr^2 \rbbr\\
	= & \frac{2}{3}\Exp\lbbr \lbr A_1 - 1 \rbr^2 \rbbr + \frac{4}{3}\Exp \lbbr  (A_1-1) A_2\rbbr + \Exp\lbbr A_2^2\rbbr\\
	= & \frac{2}{3} \Exp\lbbr\lbr A_1 +  A_2 - 1 \rbr^2\rbbr + \frac{1}{3} \Exp\lbbr A_2^2\rbbr.
	\end{align}
	The inequality is because 
	\begin{align}
		\Exp_{x,y\sim \DS}\lbbr \lbr x_{[1:k]}^\top A_{k,k} x_{[1:k]} \rbr\lbr x_{[1:k]}^\top A_{k,\bar{k}} x_{[k+1:d]} \rbr \rbbr = 0,
	\end{align}
	\begin{align}
\Exp_{x,y\sim \DS}\lbbr \lbr  x_{[k+1:d]}^\top A_{\bar{k},\bar{k}} x_{[k+1:d]}  \rbr\lbr x_{[1:k]}^\top A_{k,\bar{k}} x_{[k+1:d]} \rbr \rbbr = 0,
\end{align}
	\begin{align}
\Exp_{x,y\sim \DS}\lbbr \lbr x_{[1:k]}^\top A_{k,\bar{k}} x_{[k+1:d]} \rbr y  \rbbr = 0.
\end{align}
The inequality is equality if and only if $A_{k,\bar{k}} = 0$.
	
	The regularizer part of $\Lsource{\lambda}{\as, \W} $ can be lower bounded by:
	\begin{align}
	\sum_{i=1}^{\df} \lbr \lbr\as_i\rbr^2 + \norm{\w{i}}^2 \rbr 
	\ge& \frac{3}{2^{2/3}} \sum_{i=1}^{\df} \lbr |\as_i| \cdot \norm{\w{i}}^2\rbr^{2/3}\\
	\ge& \frac{3}{2^{2/3}}  \lbr \sum_{i=1}^{\df} |\as_i| \cdot \norm{\w{i}}^2 \rbr^{2/3}\\
	\ge& \frac{3}{2^{2/3}}  \lbr \sum_{i=1}^{\df} |A_{[i,i]}| \rbr^{2/3},
	\end{align}
	where the first inequality is AM-GM inequality, the second inequality is by concavity of $(\cdot)^{2/3}$. The third inequality is because for diagonal matrix $D$ that has $1$ at $(i,i)$ if $A_{[i,i]}\ge0$, $-1$ at $(i,i)$ if $A_{[i,i]}< 0$, we have
	\begin{align}
	\sum_{i=1}^{\df} |A_{[i,i]}|  = tr(AD) =  \sum_{i=1}^{\df} \as_i \w{i}^\top D \w{i} \le \sum_{i=1}^{\df} |\as_i| \cdot \norm{\w{i}}^2 .
	\end{align}
	All the inequalities are equality if and only if $\lbr\as_i\rbr^2 =\frac{1}{2} \norm{\w{i}}^2>0$ for at most one $i\in[\df]$, and for all other $i\in[\df]$ there is $|\as_i| =\norm{\w{i}}=0$.
	
	Combining the two parts gives a lower bound for $\Lsource{\lambda}{\as, \W} $:
	\begin{align}
	\Lsource{\lambda}{\as, \W} \ge & \frac{3\lambda}{2^{2/3}}  \lbr \sum_{i=1}^{\df} |A_{[i,i]}| \rbr^{2/3} +  \frac{2}{3} \Exp\lbbr\lbr A_1 +  A_2 - 1 \rbr^2\rbbr + \frac{1}{3} \Exp\lbbr A_2^2\rbbr\\
	\ge & \frac{3\lambda}{2^{2/3}}  \lbr  |tr(A_{k,k} + \frac{2}{3} A_{\bar{k}, \bar{k}})| \rbr^{2/3} +  \frac{2}{3} \lbr \Exp\lbbr A_1 + A_2 \rbbr - 1 \rbr^2,
	\end{align} 
	where both inequalitites are equality if and only if $A_{\bar{k}, \bar{k}} = 0$ (therefore $A_2=0$) and $Var\lbbr A_1\rbbr = 0$ (therefore $A_{k,k}$ is diagonal by Lemma~\ref{lemma_no_variance}).

	Notice that $\Exp[A_1 + A_2] = tr(A_{k,k} + \frac{2}{3} A_{\bar{k}, \bar{k}})$, the above lower bound is further minimized when $\Exp[A_1]=\mu^*$ where $\mu^*$ is the minimizer of function $L(\mu) =\frac{3\lambda}{2^{2/3}}  \lbr |\mu| \rbr^{2/3} + \frac{2}{3} \lbr \mu- 1\rbr^2$.

	To see when this lower bound is achieved, we combine all the conditions for the inequalities to be equality. When $\mu^*=0$, this lower bound is achieved if and only if $\as = 0$ and $\W = 0$. 
	When $\mu^*>0$, this lower bound is only achieved when the solution look like this: for one $i\in[\df]$, $\as_i=(\frac{\mu^*}{2})^{1/3}$, $\w{i} = \pm (\sqrt{2} \as_i)\cdot e_j$ for some $j\le k$; for all other  $i\in[\df]$, $|\as_i| = \norm{\w{i}} = 0$.
	
	Obviously, there is either $\mu^* = 0$ or $\mu^* >0$. Also, when $\lambda<0.1$, the minimizer $\mu^*$ of  $L(\mu)$ is strictly larger than $0$ (since $L(1)<L(0)$). So this completes the proof.
\end{proof}

\begin{lemma}\label{lemma_no_variance}
	Let $M\in \Real^{k\times k}$ be a symmetric matrix, $x\in\Real^k$ is a random vector where each dimension is indepedently uniformly from $\{\pm 1\}$. Then, $\Var[x^\top M x]=0$ if and only if $M$ is a diagonal matrix.
\end{lemma}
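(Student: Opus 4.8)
The plan is to compute $\Var[x^\top M x]$ directly, exploiting the crucial fact that each coordinate satisfies $x_i^2 = 1$. First I would expand the quadratic form as $x^\top M x = \sum_{i,j} M_{ij} x_i x_j$ and, using symmetry of $M$, split it into the diagonal part $\sum_i M_{ii} x_i^2$ and the off-diagonal part $2\sum_{i<j} M_{ij} x_i x_j$. Because $x_i \in \{\pm 1\}$ forces $x_i^2 = 1$, the diagonal part collapses to the deterministic constant $tr(M)$, which contributes nothing to the variance. Hence $\Var[x^\top M x] = 4\,\Var\big[\sum_{i<j} M_{ij} x_i x_j\big]$, and the entire question reduces to the off-diagonal quadratic in the Rademacher products $x_i x_j$.

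Next I would compute the second moment of $Z := \sum_{i<j} M_{ij} x_i x_j$. Since $\E[x_i x_j] = 0$ for $i \neq j$ by independence and mean-zero, we get $\E[Z] = 0$, so $\Var[Z] = \E[Z^2] = \sum_{i<j}\sum_{k<l} M_{ij} M_{kl}\, \E[x_i x_j x_k x_l]$. The key ingredient is the mixed fourth moment $\E[x_i x_j x_k x_l]$ for index pairs $i<j$ and $k<l$: because the coordinates are independent Rademacher variables, this expectation equals $1$ when every index appears an even number of times and $0$ otherwise. A short case analysis shows that, under the constraints $i<j$ and $k<l$, the even-multiplicity condition holds precisely when $(i,j) = (k,l)$; any other overlap pattern (all four distinct, or exactly one shared index) leaves at least one index with odd multiplicity and kills the term.

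Combining these observations, all cross terms of the double sum vanish and only the diagonal terms survive, yielding $\E[Z^2] = \sum_{i<j} M_{ij}^2$ and therefore $\Var[x^\top M x] = 4\sum_{i<j} M_{ij}^2 = 2\sum_{i \neq j} M_{ij}^2$. This is a nonnegative sum of squares, so it equals zero if and only if every off-diagonal entry $M_{ij}$ with $i \neq j$ vanishes, i.e. $M$ is diagonal, which is exactly the claim. There is no serious obstacle in this argument; the only step demanding care is the bookkeeping in the fourth-moment case analysis, namely confirming that every overlap pattern other than $(i,j)=(k,l)$ produces an index of odd multiplicity and hence a zero expectation. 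This becomes routine once the even-multiplicity rule for products of independent Rademacher variables is stated explicitly.
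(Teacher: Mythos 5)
Your proof is correct, but it takes a genuinely different route from the paper's. You compute the variance exactly: writing $x^\top M x = \Tr(M) + 2\sum_{i<j}M_{ij}x_ix_j$ (the diagonal part is deterministic because $x_i^2=1$) and invoking the even-multiplicity rule for mixed moments of independent Rademacher variables, you arrive at the closed form $\Var[x^\top M x] = 4\sum_{i<j}M_{ij}^2 = 2\sum_{i\neq j}M_{ij}^2$, from which the equivalence is immediate; your case analysis showing $\E[x_ix_jx_kx_l]\neq 0$ only when $(i,j)=(k,l)$ under the constraints $i<j$, $k<l$ is accurate. The paper instead argues by a polarization identity: zero variance forces $q(x) := x^\top M x$ to be constant on all of $\{\pm 1\}^k$ (every hypercube point carries positive probability), and then for each $i\neq j$ the discrete second difference $M_{ij} = \frac{1}{8}\left( q(\mathbf{1}-2e_i-2e_j) + q(\mathbf{1}) - q(\mathbf{1}-2e_i) - q(\mathbf{1}-2e_j)\right)$ extracts the off-diagonal entry directly and shows it vanishes. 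Your approach buys a quantitative strengthening -- an exact variance formula, hence a stability statement (small variance forces small off-diagonal mass) -- at the cost of fourth-moment bookkeeping; the paper's approach avoids all moment computations and needs only four point evaluations, but yields only the qualitative dichotomy and leans on the (easy, though worth stating) observation that zero variance implies constancy on the entire support.
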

\begin{proof}[Proof of Lemma~\ref{lemma_no_variance}]
	In one direction, when $M$ is diagonal matrix, obviously $\Var[x^\top M x]=0$. In the other direction, when $\Var[x^\top M x]=0$, there has to be $x^\top M x$ be the same for all $x\in \{\pm 1\}^k$. For any $i\ne j$, let  $x^{(1)} = 1-2e_i - 2e_j$, $x^{(2)} = 1$, $x^{(3)} = 1-2e_i$, $x^{(4)} = 1-2e_j$. Then the $(i,j)$ element of $M$ is $\frac{1}{8} \lbr {x^{(1)} }^\top M  x^{(1)} + {x^{(2)} }^\top M  x^{(2)} - {x^{(3)} }^\top M  x^{(3)}- {x^{(4)} }^\top M  x^{(4)}\rbr$, which is $0$. So $M$ has to be a diagonal matrix.
\end{proof}

\begin{proof}[Proof of Theorem~\ref{theorem_meta_transfer}.]	
	Define the source loss as in Lemma~\ref{lemma_source_solution}, then we have 
	\begin{align}
	\Lmeta{\lambda}{\as, \W} = \Lsource{\lambda}{\as, \W} + \Exp \lbbr \LossD{\DTb}{\thetahatphi, \W} \rbbr.
	\end{align}
	By Lemma~\ref{lemma_source_solution}, the source loss $\Lsource{\lambda}{\as, \W} $ is minimized by a set of solutions that look like this: for one $i\in[\df]$, $\as_i >0$, $\w{i} = \pm (\sqrt{2} \as_i)\cdot e_j$ for some $j\le k$; for all other  $i\in[\df]$, $|\as_i| = \norm{\w{i}} = 0$. 	

	When $j=1$, the only feature in $\W$ is $e_1$.  When $\nt\ge18\log\frac{2}{\xi}$, according to Chernoff bound, with probability at least $1-\frac{\xi}{2}$ there is strictly less than half of the data satisfy $x_1=0$. Therefore, any $\DTa$ contains data with $x_1\ne 0$, and the only target head that fits $\DTa$ has to recover the ground truth. Hence there is $\Exp \lbbr \LossD{\DTb}{\thetahatphi, \W} \rbbr = 0$. 
	
	When $j\ne 1$, the only feature is $e_j$. This feature can be used to fit the target data if and only if either $x_{i [j]}^2=x_{i[1]}^2$ for all target data $x_i$, or $x_{i[1]}=0$ for all $x_i$. Since there are at most $k-1$ possible $j$, by union bound we know the probability of any of these happens for any $j\ne 1$  is at most $k (\frac{2}{3})^{n_t}$. Hence, when $n_t\ge 3\log \frac{2k}{\xi}$, the probability of any $e_j$ fits the target data is smaller than $\frac{\xi}{2}$.
	Therefore, with probabiltiy $1-\frac{\xi}{2}$,  $\Exp \lbbr \LossD{\DTb}{\thetahatphi, \W} \rbbr > 0$ for any $j\ne 1$. 
	
	So with probability at least $1-\xi$, the only minimizer of $\Lmeta{\lambda}{\as, \W}$ is the subset of minimizers of $\Lsource{\lambda}{\as, \W} $ with feature $e_1$, and with this $\W$ and any random $\DTa$, the only $\at(\DTa, \W)$ that fits the target recovers the ground truth, i.e.,  $\Exp_{x,y\sim \DT} \lbbr \losssub{\at(\DTa, \W), \W}{x,y} \rbbr = 0$.
	
\end{proof}

\end{document}